\newcommand{\esp}[2][]{\mathbb{E}_{\scriptscriptstyle#1}\!\sqb{#2}}
\newcommand{\var}[2][]{\text{var}_{\scriptscriptstyle#1}\!\br{#2}}
\newcommand{\br}[1]{\left(#1\right)}
\newcommand{\sqb}[1]{\left[#1\right]}
\newcommand{\bbr}[1]{\left\llbracket#1\right\rrbracket}
\newcommand{\rsq}{{\small $R^2\,$}}
\theoremstyle{plain}
\newtheorem{theorem}{Theorem}[section]
\newtheorem{proposition}[theorem]{Proposition}
\setlist[itemize]{labelindent=0em, labelwidth=0em, labelsep*=0.5em, leftmargin=0.5em, style=standard}
\def\mytitle{Imputation for prediction: beware of diminishing returns.}
\title{\mytitle}
\author{%
  Marine Le Morvan\\
  Soda, Inria Saclay\\
  \texttt{marine.le-morvan@inria.fr}
  % examples of more authors
  \And
  Gaël Varoquaux\\
  Soda, Inria Saclay\\
  \texttt{gael.varoquaux@inria.fr}
}
\begin{document}

\doparttoc % Tell to minitoc to generate a toc for the parts
\faketableofcontents % Run a fake tableofcontents command for the partocs

% \part{} % Start the document part
% \parttoc % Insert the document TOC

\maketitle

\begin{abstract}
Missing values are prevalent across various fields, posing challenges for training and deploying predictive models. In this context, imputation is a common practice, driven by the hope that accurate imputations will enhance predictions. However, recent theoretical and empirical studies indicate that simple constant imputation can be consistent and competitive. This empirical study aims at clarifying \emph{if} and \emph{when} investing in advanced imputation methods yields significantly better predictions. Relating imputation and predictive accuracies across combinations of imputation and predictive models on 19 datasets, we show that imputation accuracy matters less i) when using expressive models, ii) when incorporating missingness indicators as complementary inputs, iii) matters much more for generated linear outcomes than for real-data outcomes. Interestingly, we also show that the use of the missingness indicator is beneficial to the prediction performance, \emph{even in MCAR scenarios}. Overall, on real-data with powerful models, improving imputation only has a minor effect on prediction performance. Thus, investing in better imputations for improved predictions often offers limited benefits.
\end{abstract}

\section{Introduction}

Databases are often riddled with missing values due to faulty measurements, unanswered questionnaire items or unreported data. This is typical of large health databases such as the UK Biobank \citep{Sudlow2015ukbb}, the National Health Interview Survey \citep{blewett2019nhis} and others \citep{Perez2022benchmarking}. Statistical analysis with missing values has been widely studied, particularly to estimate parameters such as means and variances \citep{Little2019statistical}. However, how to best deal with missing values for prediction has been less studied. Since most machine learning models do not natively handle missing values, common practice is to impute missing values before training a model on the completed data, often with the expectation that ``good'' imputation improves predictions. Considerable efforts have been dedicated to improving imputation techniques, utilizing Generative Adversarial Networks \citep{yoon2018gain}, Variational AutoEncoders \citep{mattei2019miwae}, optimal transport \citep{muzellec2020missing} or AutoML-enhanced iterative conditional imputation \citep{jarrett2022hyperimpute} among others. Most of these studies concentrate on imputation accuracy without assessing performance on subsequent tasks. However, theoretical arguments suggest that good imputation is not needed for good prediction \citep{LeMorvan2021whats, Josse2019consistency}. These arguments are asymptotic and whether they hold in typical cases is debatable. To address the discrepancy between this theory and the emphasis on imputation efforts, there is a critical need for empirical studies to determine whether better imputations actually lead to better predictions.
%The mentions of: impute well => predict well (can be found in some works on imputation). \cite{yoon2018gain} find that their GAN-based model GAIN outperforms other imputations schemes on 4 classification tasks, with a logistic regression as downstream prediction model. \cite{bertsimas2018predictive} also find that their optimization-based imputation methods also leads to better classification and regression performances in downstream tasks.

Theory does establish that in some scenarios, better imputations imply better predictions. %For instance with a linearly generated outcome, the optimal prediction is a linear model on perfectly imputed data \citep{LeMorvan2021whats}.
For instance with a linearly-generated outcome, the optimal prediction is a linear model of the optimally-imputed data \citep{LeMorvan2021whats}.
%Thus, using a linear model for prediction, better imputations typically yield better predictions. Interestingly though, theoretical results indicate that for linear models, simple constant imputations suffice in very high dimensions \citep{Ayme2023Naive}, or in small dimensions with uncorrelated features \citep{Ayme2024RandomFeatures}.
Thus, when using a linear model for prediction, better imputations generally lead to better predictions. However, theoretical results suggest that in very high-dimensional settings \citep{Ayme2023Naive} or in small dimensions with uncorrelated features \citep{Ayme2024RandomFeatures}, simple constant imputations can be sufficient for linear models.
Beyond linear models, empirical studies on real data have shown the competitiveness of simple imputations --such as the mean-- \citep{Paterakis2024we, Perez2022benchmarking, Shadbahr2023impact, luengo2012choice} aligning with theoretical arguments. However, their findings may be driven by ``predictive'' missingness \cite[Missing Not At Random data][]{Little2019statistical,Josse2019consistency}, for which most imputation methods are invalid.

Drawing robust and broadly applicable conclusions from existing empirical research is challenging, as several key experimental factors can influence conclusions. One critical factor is the missingness mechanism, which falls into three categories: MCAR (Missing Completely At Random), MAR (Missing At Random), and MNAR (Missing Not At Random). MCAR is the simplest case, where missing entries occur with a fixed probability, independent of observed or unobserved data. In MAR, missingness depends solely on observed variables, whereas in MNAR, it is related to the unobserved values themselves, making it informative. Naturally occurring missing values are generally assumed to be MNAR. However, most imputation algorithms are not valid in MNAR scenarios, questioning the utility of advanced imputation algorithms in such cases. MCAR allows for higher quality imputations but their benefit for downstream performance needs to be evaluated.
%This work questions the benefits of imputation in more favorable MCAR settings.
%, both in terms of imputation quality and downstream predictive performance.
Beyond the missingness mechanism, other influential factors include (i) the missing rate - low proportions of missing values may have a limited impact on predictive performance; (ii) the use of missingness indicators - concatenating a binary indicator for missing values to the original sample is a common practice for prediction, as implemented in scikit-learn \citep{Pedregosa2011scikit}; (iii) the choice of downstream model - more flexible models may better compensate for poor imputation; and (iv) the proportion of categorical features - for categorical features, the most effective approach is often to treat missing values as a separate category, bypassing imputation altogether.

To draw actionable conclusions taking into account these potentially influential factors, we \emph{quantify the change in predictive performance resulting from a gain in imputation accuracy across controlled experimental settings}. First, we focus on MCAR as a best-case scenario since it allows for high-quality imputations that are most likely to improve downstream performance. If only limited gains are observed under these ideal conditions, the benefits in general scenarios are likely smaller. Thus we aim to establish an upper bound on the potential benefits of imputation. Results under MNAR conditions are nonetheless included (subsection 4.4 and Appendix L) and confirm that imputation provides less benefits in such cases. Following a similar rationale, we focus on numerical features, where imputation quality is most likely to affect downstream performances, as missing categorical values are better handled as a separate category. As for the remaining factors, we vary the choice of downstream prediction model, the use of the missingness indicator, as well as the missing rate to condition our conclusions on the exprimental setting. Rather than identifying the “best” imputation and prediction pipeline, our goal is to rigorously assess the impact of imputation quality on predictive performance.

\Cref{sec:related_work} introduces related work, covering both benchmarks for prediction with missing values and available theory. \Cref{sec:experimental_setup} details our experimental procedures, specifically the methods examined. \Cref{sec:results} presents our findings, relating gains in imputation to prediction performance. Finally, \cref{sec:conclusion} summarizes the lessons learned.

%\citep{Sperrin2020missing}
\section{Related work}%
\label{sec:related_work}

\paragraph{Benchmarks.}
%Refer to \citep{Paterakis2024we} for a good review of available benchmarks.

Several benchmark studies have investigated imputation in a prediction context \citep{Paterakis2024we, Jager2021benchmark, Ramosaj2022relation, Woznica2020does, Perez2022benchmarking, Poulos2018missing, Shadbahr2023impact, li2024comparison, luengo2012choice, bertsimas2024simple}. However, drawing definitive conclusions from most studies is challenging due to various limitations in scope and experimental choices. For example, \cite{bertsimas2018predictive, li2024comparison} trained imputation methods using both the training and test sets, rather than applying the imputation learned on the training set to the test set, which is not possible with many imputation packages. This approach creates data leakage. \cite{Woznica2020does} trained imputers separately on the train and test sets, which creates an ``imputation shift'' —a situation where the imputation patterns between the train and test sets differ, causing inconsistencies in the data used for model training versus model evaluation. \cite{Jager2021benchmark} discards and imputes values in a single column of the test set, chosen at random and fixed throughout the experiments. Yet as they note, conclusions can change drastically depending on the importance of the to-be-imputed column for the prediction task or its correlation with other features. Some studies \citep{Poulos2018missing, Ramosaj2022relation} use a small number of datasets (resp. 2 and 5 datasets from the UCI machine learning repository respectively), thus limiting the significance of their conclusions. \cite{Woznica2020does} do not perform hyperparameter tuning for the prediction models, while \cite{Ramosaj2022relation} tunes hyperparameters on the complete data, though it is unclear whether the best hyperparameters on complete data are also the best on incomplete data. Furthermore, some benchmarks focus on specific types of downstream prediction models, such as linear models \citep{Jager2021benchmark}, AutoML models \citep{Paterakis2024we} or Support Vector Machines \citep{li2024comparison}, meaning their conclusions should not be generalized to all types of downstream models.  
%For instance, \cite{Paterakis2024we} considers 35 datasets, most with only a few hundred samples and a 50\% train-test split, making their conclusions more relevant to small data scenarios.
%The data used in benchmarks also affects the scope of the conclusions. 
Finally, only \citet{Perez2022benchmarking} and \citet{Paterakis2024we} evaluate the use of the missingness indicator as complementary input features.

Among the benchmarks with largest scope, \cite{Paterakis2024we} recommend mean/mode imputation with the indicator as the default option in AutoML settings, both for native and simulated missingness. It is among the top-performing approaches, never statistically significantly outperformed, and is also the most cost-effective. They also show that using the missingness indicator as input improves performances slightly but significantly for most imputation methods on naturally-occurring missingness. \cite{Perez2022benchmarking} focus on predictive modeling for large health databases, which contain many missing values. They compare various imputation strategies (mean, median, k-nearest neighbors imputation, MICE) combined with gradient-boosted trees (GBTs) for prediction, as well as GBTs with native handling of missing values. Similarly to \cite{Paterakis2024we}, they find that appending the indicator to the imputed data significantly improves performances, which may reflect  MNAR data. While they recommend resorting to the native handling of missing values as it is relatively cheap, their results further indicate that no method is significantly better than using the mean as imputation method together with the indicator.
\cite{Shadbahr2023impact} also find that the best imputations do not necessarily result in the best downstream performances. Using an analysis of variance, they show that the choice of imputation method has a significant but small effect on the classification performance. %Yet poor imputations can compromise model interpretability.

Whether better imputation leads to better prediction may vary depending on factors like the choice of downstream model, the missingness rate, or the characteristics of the datasets. Yet, many studies seek a definitive conclusion across diverse settings. Only \cite{Paterakis2024we} conducted a meta-analysis, but it did not determine when more advanced imputation strategies are beneficial compared to mean or mode imputations. Identifying scenarios in which better imputations are more likely to improve predictions is however of strong practical interest.

\paragraph{Theoretical insights.} Previous works have addressed this question from a theoretical point of view. \cite{LeMorvan2021whats} showed that for all missingness mechanisms and almost all deterministic imputation functions, universally consistent algorithms trained on imputed data asymptotically achieve optimal performances in prediction. This is in particular true for simple imputations such as the mean \citep{Josse2019consistency}, thereby  providing rationale to favor simple imputations over more accurate ones. Essentially, optimal prediction models can be built on mean-imputed data by modeling the mean as a special value encoding for missingness. \cite{Ayme2023Naive} also provide theoretical support for the use of simple imputations, as they advocate for the use of zero imputation in high-dimensional settings. They show that, for a linear regression problem and MCAR missingness, learning on zero-imputed data instead of complete data incurs an imputation bias that goes to zero when the dimension increases. This holds given certain assumptions on the covariance matrix, which intuitively impose some redundancy among variables. Finally, \cite{VanNess2023MIM} prove that in MCAR settings, the best linear predictor assigns zero weights to the missingness indicator, whereas these weights are non-zero in MNAR settings. Their theoretical results imply that the missingness indicator neither degrades nor enhances performances asymptotically in MCAR.

% \begin{itemize}
%     \item \cite{LeMorvan2021whats} \cite{Josse2019consistency}

%     \item \cite{Ayme2023Naive} provide theoretical support for the use of zero imputation in high-dimensional data. They study the imputation bias, i.e., the difference between the best achievable risk on complete versus imputed data. They show that, for a linear regression problem and MCAR missigness, learning on zero-imputed data instead of complete data incurs an imputation bias that tends to zero when the dimension increases. This holds given certain assumptions on the covariance matrix, which intuitively ensure that there is redundancy among variables.

%     Compared to the works that assume an arbitrarily complex prediction function, this works supposes that the prediction function is linear. And despite an imputation by zero, the linear prediction function leads to an unbiased prediction when the dimensionality increases.

%     \item \cite{VanNess2023MIM} prove that in MCAR settings, the best linear predictor assigns zero weights to the missingness indicator, whereas these weights are non-zero in MNAR settings. Their theoretical results imply that the missingness indicator neither degrades nor enhances performances asymptotically in MCAR.
% \end{itemize}

\section{Experimental setup.}
\label{sec:experimental_setup}

\paragraph{Imputation methods.} We chose four imputation models to \emph{cover a wide range of imputation qualities}, in order to facilitate the estimation of effects and correlations between imputation and prediction accuracies. %Importantly, we did not include many state-of-the art imputers since (i) our goal \emph{is not} to extensively benchmark imputation strategies but to quantify the link between imputation quality and prediction performance, and (ii).
\begin{description}[nosep, leftmargin=1em]
    \item[mean] - each missing value is imputed with the mean of the observed values in a given variable. It provides a useful baseline for assessing the effectiveness of advanced techniques.
    
    \item[iterativeBR] - each feature is imputed based on the other features in a round-robin fashion using a Bayesian ridge regressor. This method is related to \texttt{mice} \citep{VanBuuren2011mice} as it also relies on a fully conditional specification \citep{VanBuuren2018flexible}. It is implemented in \texttt{scikit-learn}'s \texttt{IterativeImputer} \citep{Pedregosa2011scikit}.

    \item[missforest] \citep{Stekhoven2012missforest} - operates in a manner analogous to \texttt{iterativeBR}, wherein it imputes one feature using all others and iteratively enhances the imputation by sequentially addressing each feature multiple times. The key distinction lies in its utilization of a random forest for imputation rather than a linear model. We used \texttt{scikit-learn}'s \texttt{IterativeImputer} with \texttt{RandomForestRegressor} as estimators. Default parameters for Missforest were set to \texttt{n\_estimators=30} and \texttt{max\_depth=15} for the random forests (the higher the better) to keep a reasonable computational budget. Note that in \texttt{HyperImpute} \citep{jarrett2022hyperimpute}, random forests are more limited: 10 trees and a maximum depth of 4.

    \item[condexp] - uses the conditional expectation formula of a multivariate normal distribution to impute the missing entries given the observed ones. The mean and covariance matrix of the multivariate normal distribution are estimated with (pairwise) available-case estimates \citep[section 3.4]{Little2019statistical}, i.e., the $(i, j)^{th}$ entry of the covariance matrix is estimated solely from samples where both variables $i$ and $j$ are observed. This approach offers computational advantages over more resource-intensive approaches such as the Expectation-Maximization (EM) algorithm. It is related to Buck's method \citep[section 4.2]{Buck1960method, Little2019statistical}.
\end{description}

Mean imputation can be expected to give the worst imputation, with other methods offering varying improvements. In particular, \texttt{missforest} often delivers top-tier performance on tabular data \citep{waljee2013comparison, jarrett2022hyperimpute, yoon2018gain, mattei2019miwae, Jager2021benchmark}.

\paragraph{Models:} As the effect of imputation on prediction quality can be modulated by the predictive model used, we included three predictive models. We took care to include both a deep learning and a tree-based representative, as the prediction functions produced by these models have different properties, for example regarding their smoothness. These representatives were chosen because they were identified as state-of-the-art in their category according to recent benchmarks \citep{Borisov2022SurveyTabular, Grinsztajn2022Why}.
\begin{itemize}[nosep]
    \item \textbf{MLP}: a basic Multilayer Perceptron with ReLU activations, to serve as a simple baseline.

    \item \textbf{SAINT} \citep{Somepalli2021saint}: Self-Attention and Intersample Attention Transformer (SAINT) is a deep tabular model that performs both row and column attention. The numerical features are first embedded to a $d$-dimensional space before being fed to the transformer. We chose SAINT as it has been shown to be state-of-the-art among deep learning approaches for tabular data in several surveys \citep{Borisov2022SurveyTabular, Grinsztajn2022Why}.

    \item \textbf{XGBoost} \citep{Chen2016XGBoost}: We chose XGBoost as it is a popular state-of-the-art boosting method, and it has been shown to be the best tree-based model on regression tasks with numerical features only in \citet{Grinsztajn2022Why}.
\end{itemize}

For XGBoost and the MLP, hyperparameters were tuned using Optuna \citep{Akiba2019optuna} with 50 trials, i.e, Optuna draws 50 sets of hyperparameters, trains a model for each of these hyperparameter sets, and retains the best one according to the prediction performance on the validation set. For SAINT, we used the default hyperparameters provided by its authors \citep{Somepalli2021saint} for computational reasons. \Cref{tab:hyperparams_MLP,tab:hyperparams_SAINT,tab:hyperparams_xgboost} in the Appendix provide the hyperparameter spaces searched, default hyperparameters as well as optimization details.

\paragraph{Native handling of missing values:} Both SAINT and XGBoost can directly be applied on incomplete data, without prior imputation of the missing values, each with its own strategy. XGBoost uses the Missing Incorporated in Attribute (MIA) \citep{Twala2008good, Josse2019consistency} approach. When splitting, samples with missing values in the split feature can go left, right, or form their own leaf. MIA retains the option that minimizes the prediction error. In SAINT, numerical features are embedded in a $d$-dimensional space using simple MLPs. In case of missing value, a learnable $d$-dimensional embedding is used to represent the \texttt{NaN}. Each feature has its own missingness embedding.

\paragraph{The datasets}
We use a benchmark created by \cite{Grinsztajn2022Why} for tabular learning. It comprises 19 datasets (listed in \cref{tab:data_sizes}), each corresponding to a regression task with continuous and ordinal features. Missing data is generated according to a MCAR mechanism with either 20\% or 50\% missing rate. Specifically, each value has a 20\% or 50\% chance of being missing. Continuous features are gaussianized using scikit-learn's \texttt{QuantileTransformer} while ordinal features are standard scaled to have a zero mean and unit variance. This is true for all imputation and model combination except for XGBoost with native handling of missing values, as it is not expected to benefit from a normalization. The outputs $y$ are also standard scaled. In all cases, the parameters of these data normalizations are learned on the train set with missing values.
We also provide experiments on semi-synthetic data where the response $y$ is simulated as a linear function of the original data $X$. The coefficients $\beta$ of the linear function are all taken equal and scaled so that the variance of $\beta^\top X$ is equal to 1. Noise is added with a signal-to-noise ratio of 10.

\paragraph{Evaluation strategy} Each dataset is randomly split into 3 folds (train - 80\%, validation - 10\% and test - 10\%), and each split is furthermore capped at 50,000 samples (table~\ref{tab:data_sizes}). Train, validation and test sets are imputed using the same imputation model trained on the train set. Prediction models are then trained on the imputed train set.
%\Cref{tab:hyperparams_xgboost,tab:hyperparams_MLP} provide the range of hyperparameters tested.\gv{Cette phrase est une redite de choses déjà dites avant, elle peut partir} 
When the indicator is used, it is appended as extra features to the imputed data, it is not leveraged for the imputation stage. We run all combinations of the 3 prediction models with the 4 imputation techniques, with and without the indicator, resulting in $4 \times 3 \times 2 = 24$ models to which we add XGBoost and SAINT with native handling of missing values. This results in a total of 26 models displayed in Figure~\ref{fig:perfs}. Finally, the whole process is repeated with 10 different train/validation/test splits. For reproducibility, the code is available at \url{https://github.com/marineLM/Imputation_for_prediction_benchmark}.

\paragraph{Computational resources.} Properly benchmarking methodologies with missing values is particularly resource-intensive, as already emphasized in previous works \citep{Jager2021benchmark, Perez2022benchmarking}. Computing costs are driven on the one hand by the need to run the imputation and prediction pipeline across multiple train-test splits \citep[which is important to account for benchmark variance, ][]{bouthillier2021accounting}, and on the other hand by the combinatorics of imputation and prediction models, hyperparameter optimization for both, inclusion and exclusion of the missingness indicator, and varying missing rates. Multiplying the number of models (26) with the number of datasets (19 + 19 linear versions), the hyperparameter tuning (50 trials), the number of repetitions of the experiments (10), and the 2 missing rates, we get a very large number of runs (around 1,000,000). As some methods are computationally expensive --such as missforest for imputing, as well as SAINT notably when the indicator is used--, these experiments required a total of 325 CPU days for the MCAR experiments. A fifth of this time was dedicated to imputation. %\blue{Experiments were run in parallel on CPUs only, on a cluster with 20 CPUs per dataset.}
%\gv{\flushleft  We should put this number in the intro / abstract: the breadth of the study is an important aspect\flushleft}

\section{Results: Determinants of predictions performance with MCAR missingness}%
\label{sec:results}

\begin{figure}[b!]
    %\centering
    \includegraphics[width=\linewidth]{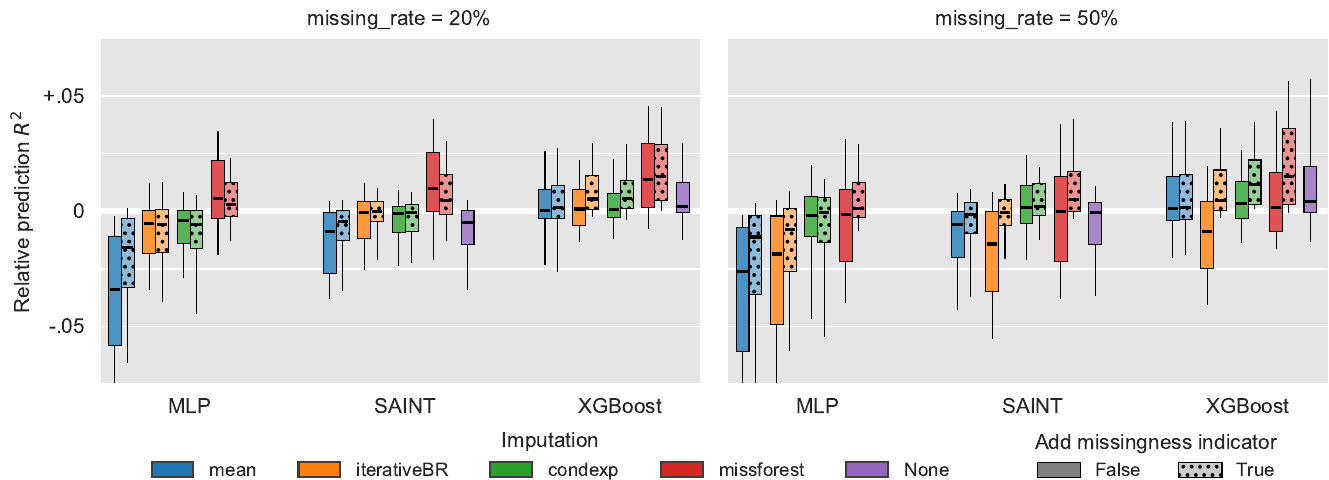}%

    \caption{\textbf{Relative prediction performances across datasets for different imputations, predictors, and use of the missingness indicator.} Each boxplot represents 200 points (20 datasets with 10 repetitions per dataset). The performances shown are \rsq scores on the test set relative to the mean performance across all models for a given dataset and repetition. A value of 0.01 indicates that a given method outperforms the average performance on a given dataset by 0.01 on the \rsq score.
    Corresponding critical difference plots in \cref{fig:cd_20_M,fig:cd_50_M}.
    \label{fig:perfs}
    }
\end{figure}

\subsection{Benefits of sophisticated imputation, the indicator, and XGBoost amid high variance.}

Figure~\ref{fig:perfs} summarizes the relative performance of the various predictors combined with the different imputation schemes across the 19 datasets. Some trends emerge: more sophisticated imputers tend to improve prediction, with missForest-based predictors often outperforming those using condexp or iterativeBR imputers, which in turn outperform predictors based on mean imputation. However, using the missingness indicator decreases this effect. Additionally, while less powerful models like MLPs show greater improvements with more advanced imputations, this effect is barely noticeable for the strongest predictor, XGBoost, which maintains its advantages on tabular data \citep[as described in][]{Grinsztajn2022Why} even in the presence of missing values.

That the best predictor barely benefits from fancy imputers brings us back to our original question: should efforts go into imputation?
Drawing a conclusion from figure~\ref{fig:perfs} would be premature:
the variance across datasets is typically greater than the difference in performance between methods (critical difference diagram in \cref{fig:cd_20_M,fig:cd_50_M}). For example, missforest + XGBoost + indicator outperforms all other methods in only 4 out of 19 datasets. Additionally, XGBoost + indicator does not perform significantly better with missforest than with condexp at 50\% missingness, while mean imputation does not always lead to the worst prediction. In what follows, we focus on quantifying the effects of improved imputation accuracy on predictions in different scenarios.

\subsection{A detour through imputation accuracies: how do imputers compare?}

Although comparing imputers is not our main objective, it is enlightening for our prediction purpose to characterize their relative performance range.
%Understanding the success of imputers for subsequent prediction calls for a disgression from our focus on prediction: characterizing how well they recover missing values. 
%
Figure~\ref{fig:imp} (left) gives imputation performances measured as the \rsq score between the imputed and ground truth values relative to the average across imputations, for each dataset. At a 20\% missing rate, missForest is the best imputer, followed by condexp and iterativeBR, which are nearly tied, while mean imputation performs significantly worse. At a 50\% missing rate, the imputation accuracy of all but mean imputation drop, but condexp is much less affected. It is interesting that such a simple method performs best. It is notably two orders of magnitude faster than missforest (figure~\ref{fig:imp} right), which makes it an imputation technique worth considering. It is possible that the gaussianization of the features helped condexp, although a feature-wise gaussianization does not produce a jointly Gaussian dataset.

\begin{figure}[b!]
    \includegraphics[width=\linewidth]{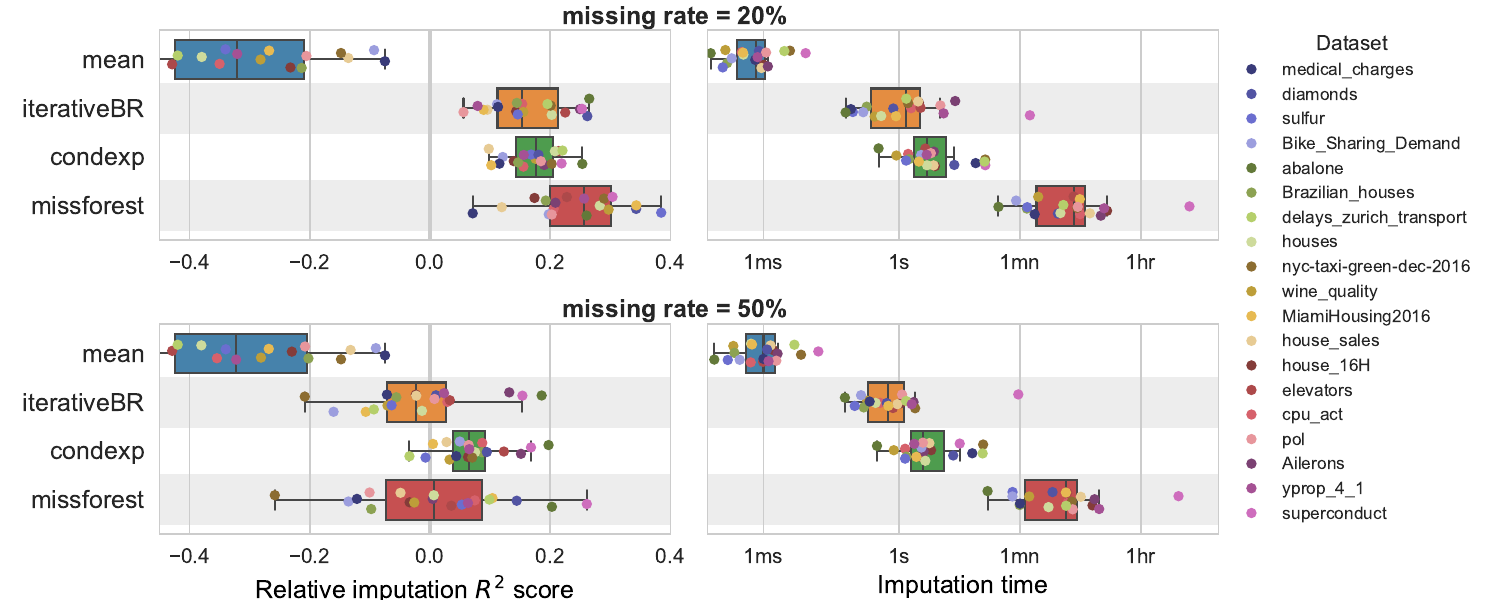}
    \caption{\textbf{Left: Imputer performance for recovery.} Performances are given as \rsq scores for each dataset relative to the mean performance across imputation techniques. A negative value indicates that a method perform worse than the average of other methods. \textbf{Right: Imputation time.}}
    \label{fig:imp}
\end{figure}

%Finally, the mean imputation results in the poorest reconstruction as expected. It appears that the imputation by the mean is not always worse though, in particular for fifa, nyc-taxi-green-dec-2016, medical\_charges or Bike\_Sharing\_Demand (figure~\ref{fig:partial_cor_scatterplots}), It comes from the fact that the imputation quality is very low for these datasets, i.e. less than 0.03 on average across methods. It may be due to the fact that the missing values cannot be recovered from the observed ones, as in the case of independent variables for example. The difficulty of the imputation problem explains why the mean imputation is not necessarily worse for these datasets (figure~\ref{fig:partial_cor_scatterplots}).

In order to highlight the link between imputation and prediction quality, it is necessary to achieve varying imputation qualities. Here the high range of imputation accuracy between the best and worst methods (an average difference of 0.5 \rsq points at 20\% and 0.3 \rsq points at 50\%) allows capturing differences in prediction performance.

%Moreover, contrasting the imputation accuracy \cref{fig:imp} with the prediction performance \cref{fig:perfs}, we can already glimpse

% But, for prediction, imputation is a means to an end; and recovery performance is not the end of the story. Indeed, at 50\% missing rate iterative and missforest exhibit comparable recovery performance, while \autoref{fig:perfs} shows that missforest leads to systematically better prediction than iterative.

\subsection{Linking imputation accuracy and prediction performances.}

%The performance of imputers in recovering missing values varies markedly. How does a gain in missing-value recovery performance translate to downstream prediction performance? To answer this question, we study the link between these two performances for a given model, across the various imputers and folds, by fitting a linear model.

\begin{figure}[t]
    \centering
    \begin{minipage}{0.7\textwidth}
        \includegraphics[scale=0.6]{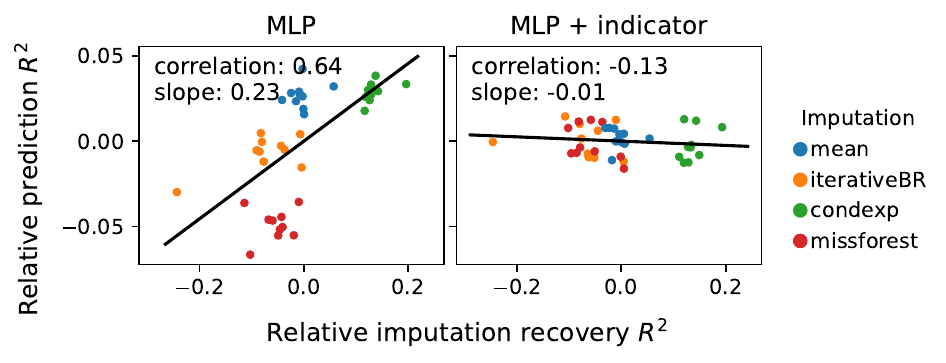}
    \end{minipage}%
    \begin{minipage}{0.3\textwidth}
        \caption{\textbf{Example fit of prediction performance as a function of imputation accuracy}, for the {\tt Bike\_Sharing\_Demand} dataset and a missing rate of 50\%: on the left using an MLP as predictor, and on the right an MLP with missingness indicator.}
        % MLP whose input is supplemented with the missingness indicator.
        \label{fig:illustration_plots}
    \end{minipage}
\end{figure}

Combining the four imputation techniques with 10 repetitions of each experiment yields 40 (imputation \rsq, prediction \rsq) pairs for each model and dataset. To quantify how improvements in imputation accuracy translate into downstream prediction performance, we fit a linear regression using these 40 points for each model and dataset\footnote{The repetition identifier is also used as a covariate in the linear regression to account for the effects of the various train/test splits on prediction performance.}. \Cref{fig:illustration_plots} gives two examples of such fit: on the {\tt Bike\_Sharing\_Demand} dataset, for a missing rate of 50\%, the prediction \rsq increases as a function of the imputation \rsq; the effect is greater for the MLP, for which the fit gives a slope of 0.23, than for the MLP with indicator for which the slope is  -0.01.

%Our experiments combine each model with each of four different imputation techniques. Moreover for each dataset, experiments are repeated 10 times with different random seeds for the simulation of missing values and train/test splits. This yields a total of 40 points per model and dataset for which we have both the R2 score of the prediction and the R2 score of the imputation. To quantify the relationship between imputation quality and prediction performance, we compute the partial correlation between the two metrics, where the effect of the random seed has been removed. Indeed, performances vary depending on the random seed as some train/test splits are easier than others. We also report the coefficient (or effect) given by a linear regression between the two metrics. The partial correlation and the effect are complementary. The effect quantifies how fast the prediction performance changes when the imputation improves, while the partial correlation incorporates a notion of significativity of the association. For a given effect, the correlation will be lower if points are scattered around the regression line with a high level of noise.
\Cref{fig:effect} summarizes the slopes estimated using the aforementioned methodology across all datasets, predictors with and without the indicator, and varying missing rates. Firstly, the fact that most slopes are positive indicates that better imputations correlate with better predictions, aligning with common beliefs. %However, this observation should be nuanced by the size of the effects.
However, this observation should be interpreted with nuance in light of the effect sizes.

\begin{figure}[b]
    \centering
    \includegraphics[width=\linewidth]{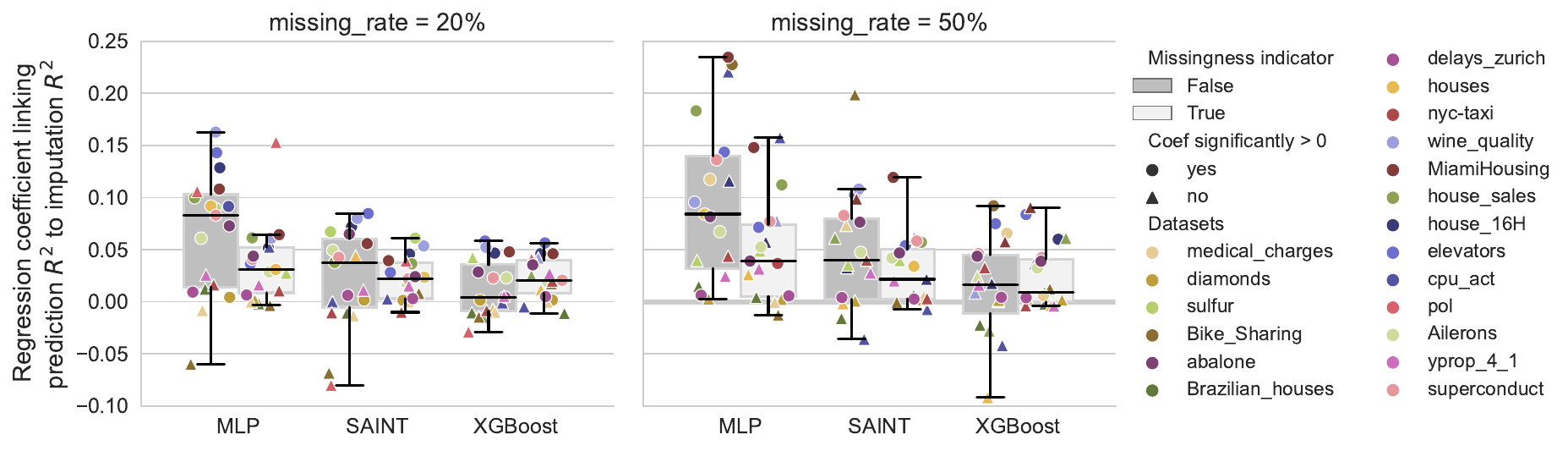}
    \caption{\textbf{Effect of the imputation recovery on the prediction performance.} We report the slope of the regression line where imputation quality is used to predict prediction performance. A coefficient is marked as significantly greater than zero (circle) if the associated p-value (one-sided T-test) is below 0.05 after Bonferroni correction for multiple testing.}
    \label{fig:effect}
\end{figure}

%It reveals and quantifies the systematic trends relating predictive performance and imputation accuracy.

\paragraph{Gains in prediction \rsq are 10\% or less of the gains in imputation \rsq.} \Cref{fig:effect} shows that the slopes are typically small, rarely exceeding $0.1$. This implies that %a change in imputation strategy giving
an improvement of $0.1$ in imputation \rsq typically leads to an improvement in prediction \rsq that is 10 times smaller, i.e. a gain of $0.01$ in prediction \rsq, or even less. For XGBoost, the average slope across datasets in rather close to 0.025 or less (even zero without the mask at 20\% missing rate). Thus, an enhancement of 0.3 in imputation \rsq, which represents the average difference between the best of the worst imputer in this scenario (mean vs condexp in \cref{fig:imp}), implies a gain in prediction \rsq of only 0.0075.

%A notable feature of Figure 1 is that the slope values are generally low.

%\paragraph{Gain in imputation recovery improves less expressive predictors.}
\paragraph{Good imputations matter less for more expressive predictors.} Comparison between models shows a decrease in slope from MLP to SAINT, to XGBooost. A one-sided Wilcoxon signed-rank test assessing whether the median of each boxplot in Figure~\ref{fig:effect} is significantly greater than 0 reveals that the positive effect of imputation on prediction is significant for the MLP, but not for SAINT or XGBoost (case without missingness indicator). These results illustrate the idea that a powerful model can compensate for the simplicity or inaccuracy of an imputation (in our case, the MLP can be considered the least expressive model, and XGBoost the most expressive). \cite{LeMorvan2021whats} gives a formal proof in an extreme case: given enough samples, a sufficiently expressive model can always build a Bayes optimal predictor even on the simplest imputations (e.g. a constant).

\paragraph{Good imputations matter less when adding the indicator.} Figure~\ref{fig:effect} shows that adding the missingness indicator clearly decreases the effect size: imputing better has less impact on performance when the indicator is used (we discuss this effect further in \cref{ss:mask}).

\begin{figure}
    \centering
    \includegraphics[width=\linewidth]{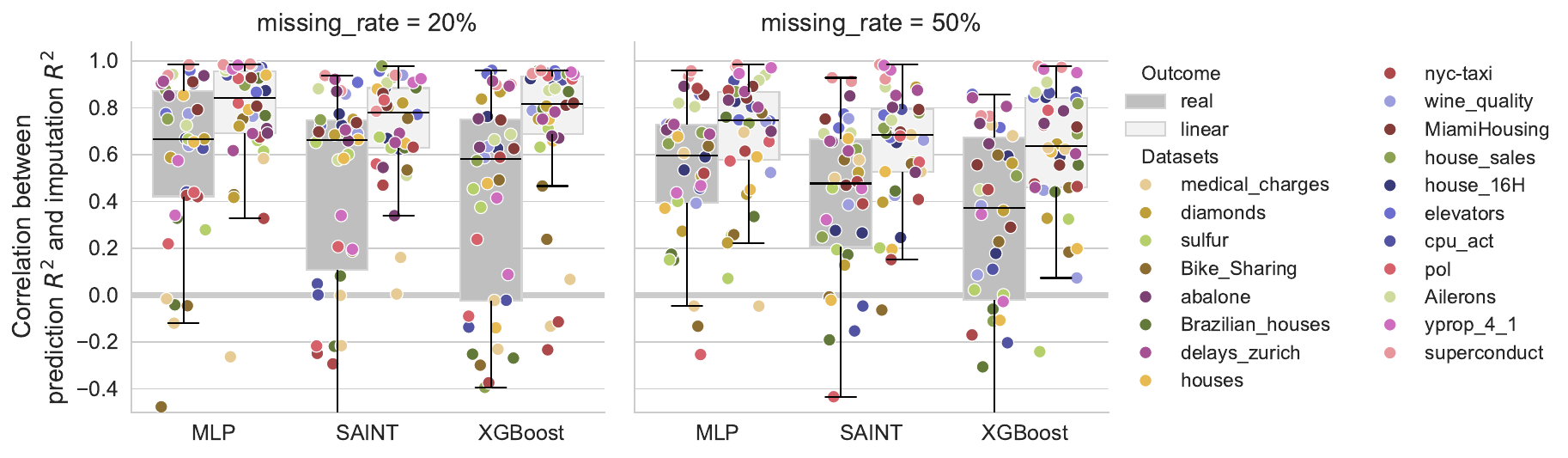}
    \caption{\textbf{Correlation between imputation quality and prediction performance.} A correlation close to 1 indicates that the quality of imputations is stronly associated to the quality of predictions, while a correlation close to zero means that the quality of predictions is not linked to the quality of imputations. Each correlation is computed using 40 different imputation/performance pairs, made of 4 imputation methods (mean, iterativeBR, missforest, condexp) repeated 10 times.
    }
    \label{fig:correlation}
\end{figure}

\paragraph{Good imputations matter less when the response is non-linear.}
% When the response $y$ is a non-linear function of the features $X$, the prediction function to learn is complicated even after perfect single imputation \citep{LeMorvan2021whats}. There are thus reasons to believe that imputation recovery performance will be less simply linked to prediction performance. To investigate this, we compare the real datasets to matching semi-simulated datasets where $y$ is simulated as a linear function of the given $X$. In addition, we measure the correlation\footnote{Specifically we use the partial correlation, partialing out the effect of the random seed.} in addition to the slope, to quantify the reliability of the association: the correlation capture not only the effect size (slope) but also the amount of variance explained (\cref{app:link_effect_cor} explains this classic result).
When the response $y$ is a linear function of the input $X$, the best predictor can be built using a linear model on the most accurate simple imputation. However, when the responses are non-linear, it may be difficult to learn the best possible predictor as it becomes a discontinuous function even with the most accurate imputation \citep{LeMorvan2021whats}. There are thus reasons to believe that response non-linearity, which is common in real data, alters the relationship between imputation accuracy and prediction performance. To investigate this, we compare the real datasets with matching semi-simulated datasets where $y$ is simulated as a linear function of the input $X$. We also measure correlation\footnote{Specifically we use the partial correlation, partialing out the effect of repeated train/test splits.} (\cref{fig:correlation}) in addition to the slope, to quantify the reliability of the association: correlation captures not only the effect size (slope) but also the amount of noise (\cref{app:link_effect_cor} recalls this classic result) in the relationship.
While the effects are similar between real and linear outcomes (\cref{fig:effect_linear} gives effects in the semi-simulated case), the correlation between imputation accuracy and prediction performance, averaged across all datasets, is systematically smaller for real outcomes than for linear ones (\cref{fig:correlation}). The average decrease in correlation lies between 0.1 and 0.3 across models. Moreover, the variance in correlations for real outcomes is much larger, with many datasets with a near-zero correlation. This shows that the gains expected in prediction from better imputation are much more reliably achieved when the response is linear.
% Thus imputation recovery performance correlates much more strongly with better prediction performances on simulated linear response than on real data.

\subsection{Lower effect of imputation accuracy on prediction performance with MNAR missingness.}%
\label{sec:mnar_results}%

We argue that MCAR provides a best-case scenario to study the potential benefit of imputation on prediction. It is indeed the easiest setting for imputation, as the missingness does not depend on the data. In addition, expecting general conclusions to apply universally across all MAR and MNAR mechanisms is unrealistic, as these categories encompass families of missingness mechanisms with infinitely many variations (see, for example, \cite{pereira2024imputation}). Instead, the effects will depend on the specific MAR or MNAR model used. For instance, in self-censoring (a MNAR mechanism), the probability of a feature being missing can follow a hard-thresholding function based on underlying values, making imputation very hard, or an almost flat function, which is much closer to an easy MCAR scenario.

To illustrate the difference between MCAR and MNAR scenarios, we re-ran all experiments with a self-censoring mechanism, where the probability of missingness increases smoothly from 0 to 1 over the support of the data, according to a probit function (details in Appendix~\ref{ss:probit_self_masking}). This mechanism was chosen to be neither too hard (e.g. hard thresholding), nor too easy (e.g almost MCAR). Figure~\ref{fig:effect_MNAR} shows that the estimated effects are consistently lower than in the MCAR case. When the missingness indicator is not used, improving imputation quality even has, on average, a negative effect on prediction accuracy of XGBoost and SAINT. This is mainly because mean imputation performs well compared to more advanced strategies, likely because the mean allows to retain the information that a value was imputed. Overall, these experiments show that MCAR is a best-case scenario and that imputation in other settings will bring less benefits.

\subsection{Why is the indicator beneficial, even with MCAR data?}
\label{ss:mask}

In general, we find that adding the missingness indicator improves prediction. While it is expected that adding the indicator is beneficial in MNAR scenarios, as the missingness is informative, it is less obvious in the MCAR settings studied here. Indeed, the indicator contains absolutely no relevant information for predicting the outcome. To the best of our knowledge, the benefit of using an indicator in MCAR has not yet been established.

A possible theoretical explanation for this finding lies in the challenge of learning optimal prediction functions on imputed data. The best possible predictor in the presence of missingness can always be expressed as the composition of an imputation and a prediction function \citep{LeMorvan2021whats}. But, in general, the best prediction function on the imputed data can be challenging to learn, even for perfect conditional imputation. In fact, it often displays discontinuities on imputed points. 
We hypothesize that adding the missingness indicator simplifies modeling functions that exhibit discontinuities at these points, as the indicator can act as a switch to encode these discontinuities.

To assess the role of the information encoded in the missingness indicator, we repeated the experiments with a shuffled missingness indicator, where the columns of the indicator were randomly shuffled for each sample. This preserves the total number of missing values per sample but removes information about which specific features are missing.  \Cref{fig:shuffle_sub1} demonstrates that using a shuffled missingness indicator harms prediction performance, except for XGBoost for which performances are unchanged. In contrast, the true missingness indicator improves performances (\cref{fig:perfs}). Furthermore, the shuffled indicator does not affect the relationship between imputation accuracy and prediction accuracy (\cref{fig:shuffle_sub2}), whereas the true indicator reduces the effect size (\cref{fig:effect}). These results confirm that the benefit of the missingness indicator is not due to a regularization or merely encoding the number of missing values. Experiments on feature importance further show that the importance of a feature drops when imputed compared to when observed (\Cref{sec:feature_importance}), suggesting that imputations do not contribute to predictions as effectively as observed values.

%as the indicator carries itself this specific discontinuity (zero every else, but one on these points).

% \cite{LeMorvan2021whats} has shown that the best possible predictor can always be expressed as the composition of an imputation function, and a prediction function applied to the imputed data. This holds true regardless of the specific imputation function used, even if it is a constant imputation. Yet the best prediction function on an imputed dataset changes with the choice of imputation, and can be more or less difficult to learn depending on the choice of imputation. In the case of linear models, the optimal predictor can be conveniently written as a linear function of the optimally imputed dataset. Consequently, in such cases, as long as a sufficiently accurate imputation has been achieved, learning the corresponding prediction function is easy. Conversely, there are situations in which, even with an optimal imputation, the corresponding prediction is discontinuous and thus challenging to learn. The discontinuities arise in regions of the data space where the missing data patterns of the imputed points change. In other words, the optimal prediction function differs from one missing data pattern to another. This is precisely why incorporating the missingness indicator as input can prove beneficial in MCAR settings. It simplifies the modeling of functions that exhibit discontinuities at points where the original data patterns of the observations change.\\

The case of XGBoost in Figure~\ref{fig:perfs} illustrates the importance of keeping the missingness information encoded. For 50\% missing rate, in the absence of an indicator, no imputation benefits prediction with XGBoost, and the best option is to use the native handling of missing values. This suggests that XGBoost benefits from knowing which values are missing. With advanced imputations, distinguishing between imputed and observed values becomes challenging. Appending the indicator to the imputed data reinstates the missingness information unambiguously, which enables XGBoost to benefit from more advanced imputations, in particular missforest.

\section{Conclusion}%
\label{sec:conclusion}
% \paragraph{What modulates the importance of imputation}

\paragraph{Imputation matters for prediction, but only marginally.}
Prior theoretical work showed that in extreme cases (asymptotics), imputation does not matter for predicting with missing values.
We quantified empirically the effect of imputation accuracy gains on prediction performance across many datasets and scenarios. We show that in practice, imputation does play a role.
But various factors modulate the importance of better imputations for prediction: investing in better imputations will be less beneficial when a flexible model is used, when a missing-value indicator is used, and if the response is non-linear. These results are actually in line with the theoretical results suggesting that imputation does not matter, as these hold for very flexible models (ie universally consistent). A notable new insight is that adding a missing-value indicator as input is beneficial for prediction performances \emph{even for MCAR settings}, where missingness is uninformative.

We show that large gains in imputation accuracy translate into small gains in prediction performance. These results were drawn from a favorable MCAR setting, and it is likely that with native missingness, often Missing Non At Random (MNAR), the performance gains are even smaller. As novel imputation methods usually provide small gains in imputation accuracy compared to the state-of-the-art, the corresponding gains in downstream prediction tasks are likely to be even smaller.
%We show that small gains through the lens of imputation become tiny through the lens of prediction.

There are multiple potential reasons why imputation gains do not always correlate with performance gains. For instance, some features may be well recovered, but not useful in the prediction because they are not predictive. Or even with accurate imputations, it may still be difficult to learn a predictor that performs well for all missing data patterns \citep{LeMorvan2021whats}. Finally, the imputation accuracy is also probably an imperfect measure of the potential gains in prediction: in our experiments on 50\% missing rate, missforest and iterativeBR performs comparable on average yet missforest-based predictors tend to outperform those based on iterativeBR.

\paragraph{Limitations and future work.}

%A remaining question is whether better imputations matter more for higher-dimensional datasets. In our study, the 3 datasets with higher dimension (superconduct, isolet and year) show high correlations of imputation with prediction gains, and larger effects, notably for isolet.
It would be valuable to investigate whether imputations based on random draws outperform deterministic imputations for downstream prediction tasks, and the usefulness of multiple imputations \citep{Perez2022benchmarking}. A related question is whether reconstructing well the data distribution is important for better predictions. \citet{Shadbahr2023impact} shows that it does not seem crucial for classification performances but may compromise more seriously model interpretability. Finally, appending the indicator to the input improves predictions, but doubles the feature count and may not be the most effective encoding for enhancing downstream performance. Alternative approaches, such as missingness-aware feature encodings \citep{lenz2024polar}, learned missingness embeddings \citep{Somepalli2021saint}, or missingness-aware layers \citep{LeMorvan2020neumiss}, have been proposed, but further investigation is needed in these directions.

\paragraph{Outlook} Improving imputation is often a difficult way of improving prediction. On top of imputation, future research could focus more on developing advanced modeling techniques that can inherently handle missing values and effectively incorporate missingness indicators to improve predictive performance.

\bibliography{iclr2025_conference}
\bibliographystyle{iclr2025_conference}

\newpage
\appendix

% \toptitlebar
% {\centering{\Large{\bfseries  Supplementary materials} -- \mytitle \par}}
% \bottomtitlebar

\addcontentsline{toc}{section}{Appendix} % Add the appendix text to the document TOC
\part{Appendix} % Start the appendix part
{\hypersetup{hidelinks}
\parttoc % Insert the appendix TOC
}

\clearpage

\section{List of datasets.}

This benchmark was created by \citet{Grinsztajn2022Why}, and is available on OpenML at \url{https://www.openml.org/search?type=benchmark&study_type=task&sort=tasks_included&id=336}.

\begin{table}[h]
    \caption{\textbf{Dataset dimensions.}}
    \centering\small
    \begin{tabular}{lrrr}
        \toprule
        dataset & d & n\_train & n\_test \\
        \midrule
        house\_16H & 16 & 18185 & 2274 \\
        cpu\_act & 21 & 6553 & 820 \\
        elevators & 16 & 13279 & 1661 \\
        wine\_quality & 11 & 5197 & 651 \\
        Brazilian\_houses & 8 & 8553 & 1070 \\
        house\_sales & 15 & 17290 & 2162 \\
        sulfur & 6 & 8064 & 1009 \\
        Ailerons & 33 & 11000 & 1375 \\
        Bike\_Sharing\_Demand & 6 & 13903 & 1739 \\
        % california & 8 & 16512 & 2064 \\
        diamonds & 6 & 43152 & 5394 \\
        fifa & 5 & 14450 & 1807 \\
        houses & 8 & 16512 & 2064 \\
        % isolet & 613 & 6237 & 781 \\
        medical\_charges & 3 & 50000 & 50000 \\
        MiamiHousing2016 & 13 & 11145 & 1394 \\
        nyc-taxi-green-dec-2016 & 9 & 50000 & 50000 \\
        pol & 26 & 12000 & 1500 \\
        superconduct & 79 & 17010 & 2127 \\
        % year & 90 & 50000 & 50000 \\
        yprop\_4\_1 & 42 & 7108 & 889 \\
        abalone & 7 & 3341 & 419 \\
        \bottomrule
    \end{tabular}
    \label{tab:data_sizes}
\end{table}

\section{Hyperparameter search spaces}
\label{sec:hyperparams}

\begin{table}[h]
    \caption{\textbf{XGBoost hyperparameter space.} We used the \texttt{XGBRegressor} from the \texttt{xgboost} Python library. The hyperparameters optimized are commonly accepted as the most important ones.
    The variation ranges are inspired by the ones used in \cite{Grinsztajn2022Why}, while the default hyperparameters are those of the \texttt{xgboost} library.}
    \centering
    \begin{tabular}{l r r r}
        \toprule
        parameter & range & log scale & default \\
        \midrule
        n\_estimators & $\sqb{100, 2000}$ & no & 100 \\
        max\_depth & $\sqb{1, 6}$ & no & 6\\
        learning\_rate & $\sqb{10^{-5}, 0.7}$ & yes & 0.3\\
        reg\_alpha & $\sqb{10^{-8}, 10^2}$ & yes & $10^{-8}$\\
        reg\_lambda & $\sqb{1, 4}$ & yes &1\\
        early\_stopping\_rounds & - & - & 20\\
        \bottomrule
    \end{tabular}
    \label{tab:hyperparams_xgboost}
\end{table}

\begin{table}[h]
    \caption{\textbf{MLP hyperparameter space.} We implemented the MLP in PyTorch. The parameter $d$ for the width of the MLP represents the number of features. When $d > 1024$, the width is taken equal to the number of features $d$.}
    \centering
    \begin{tabular}{l l r r}
        \toprule
        & parameter & range & default \\
        \midrule
        MLP & depth & $\bbr{0, 6}$ & 3\\
        & width & $\sqb{d, \min\br{10 d, 1024}}$ & 3d\\
        & dropout rate & $\sqb{0, 0.5}$ & 0.2\\
        Optimizer & name & - & AdamW\\
        & weight decay & - & $10^{-6}$\\
        & learning rate & - & $10^{-3}$\\
        Scheduler & name & - & \texttt{ReduceLROnPlateau}\\
        & factor & - & 0.2 \\
        & patience & - & 10\\
        & threshold & - & $10^{-4}$\\
        General & max nb. epochs & - & 2000\\
        & early stopping & - & Yes\\
        & batch size & - & 256\\
        \bottomrule
    \end{tabular}
    \label{tab:hyperparams_MLP}
\end{table}

\begin{table}[h]
    \caption{\textbf{SAINT default hyperparameters.} We used the implementation provided by \cite{Somepalli2021saint}. $d$ refers to the number of features of the dataset. We did not use a scheduler with SAINT. We followed the default configuration provided by the paper introducing SAINT \citep{Somepalli2021saint} when there is both intersample and feature attention (i.e. attention\_type = \texttt{'colrow'}).}
    \centering
    \begin{tabular}{l l r}
        \toprule
         & parameter & default \\
         \midrule
         SAINT & dim & 32 if $d < 70$, 16 if $d \in \sqb{70, 200}$, 4 if $d \geq 200$\\
         & depth & 1\\
         & heads & 4\\
         & attn\_dropout & 0.8\\
         & ff\_dropout & 0.8\\
         & attentiontype & colrow\\

         Optimizer & name & AdamW\\
         & weight decay & $10^{-2}$\\
         & learning rate & $10^{-4}$\\

         General & max nb. epochs & 100\\
         & early stopping & yes \\
         & batch size & 256\\
        \bottomrule
    \end{tabular}
    \label{tab:hyperparams_SAINT}
\end{table}

\clearpage

\section{Link between correlation and effect size.}
\label{app:link_effect_cor}

For completeness, we recall below the relationship between correlation and effect size.

\begin{proposition}[Link between correlation and effect size.]
Let $X_1 \in \mathbb R$ be a random variable, and $\beta \in \mathbb R$ a parameter. Furthermore, define:
$$X_2 = \beta X_1 + \epsilon \quad \text{where} \quad \esp{\epsilon | X_1}=0, \; \text{var}\br{\epsilon} = \sigma^2.$$
Then:
$$\text{cor}\br{X_1, X_2} = \frac{1}{\sqrt{1 + \frac{\sigma^2}{\beta^2 \text{var}\br{X_1}}}}$$
\end{proposition}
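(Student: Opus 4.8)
The plan is to compute the three second-order moments that enter the correlation coefficient — $\text{cov}(X_1,X_2)$, $\var{X_1}$, $\var{X_2}$ — directly from the definition $X_2 = \beta X_1 + \epsilon$, and then simplify. First I would observe that the conditioning assumption $\esp{\epsilon\mid X_1}=0$ gives, by the tower rule, both $\esp{\epsilon}=0$ and $\esp{X_1\epsilon}=\esp{X_1\,\esp{\epsilon\mid X_1}}=0$, hence $\text{cov}(X_1,\epsilon)=0$. This is the only place the assumption is used, and it is what makes the cross terms vanish.

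Next I would expand by bilinearity: $\text{cov}(X_1,X_2)=\beta\,\var{X_1}+\text{cov}(X_1,\epsilon)=\beta\,\var{X_1}$, and $\var{X_2}=\beta^2\var{X_1}+\var{\epsilon}+2\beta\,\text{cov}(X_1,\epsilon)=\beta^2\var{X_1}+\sigma^2$. Plugging into $\text{cor}(X_1,X_2)=\text{cov}(X_1,X_2)/\sqrt{\var{X_1}\var{X_2}}$ yields
$$\text{cor}(X_1,X_2)=\frac{\beta\,\var{X_1}}{\sqrt{\var{X_1}}\,\sqrt{\beta^2\var{X_1}+\sigma^2}}=\frac{\beta\sqrt{\var{X_1}}}{\sqrt{\beta^2\var{X_1}+\sigma^2}}.$$
Dividing numerator and denominator by $\beta\sqrt{\var{X_1}}$ (legitimate when $\beta>0$ and $\var{X_1}>0$, which is the regime of interest) gives the claimed expression $1/\sqrt{1+\sigma^2/(\beta^2\var{X_1})}$.

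The computation is entirely routine; there is no real obstacle beyond bookkeeping. The one point worth flagging is a mild sign/nondegeneracy caveat: the stated identity implicitly assumes $\beta>0$ (otherwise the right-hand side should carry a factor $\text{sign}(\beta)$, i.e. the correlation equals $\text{sign}(\beta)/\sqrt{1+\sigma^2/(\beta^2\var{X_1})}$) and $\var{X_1}\in(0,\infty)$ so that the correlation is well defined. I would either state these as standing assumptions or note the $\text{sign}(\beta)$ correction in passing; neither affects the substance of the argument.
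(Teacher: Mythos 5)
Your proof is correct and follows essentially the same route as the paper's: compute $\text{cov}(X_1,X_2)=\beta\,\var{X_1}$ and $\var{X_2}=\beta^2\var{X_1}+\sigma^2$ using $\esp{\epsilon\mid X_1}=0$ to kill the cross terms, then substitute into the correlation formula. Your explicit handling of the vanishing cross term and the $\mathrm{sign}(\beta)$, $\var{X_1}>0$ caveat is slightly more careful than the paper's derivation (which silently assumes $\beta>0$), but it is the same argument.
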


\begin{proof}
    Let's first derive the expression of the variance of $X_2$:
    \begin{align*}
        \text{Var}\br{X_2} &= \esp{\br{X_2 - \esp{X_2}}^2}\\
        &= \esp{\br{\beta X_1 + \epsilon - \beta\esp{X_1}}^2}\\
        &= \esp{ \br{\beta \br{X_1 - \esp{X_1}}}^2 + \epsilon^2}\\
        &= \beta^2 \var{X_1} + \sigma^2
    \end{align*}

    It follows that:
    \begin{align*}
        \text{cor}\br{X_2, X_1} &= \frac{\esp{\br{X_2 - \esp{X_2}} \br{X_1 - \esp{X_1}} }}{\sqrt{\var{X_1}\var{X_2}}}\\
        &= \frac{\esp{\beta \br{X_1 -\esp{X_1}}^2}}{\sqrt{\var{X_1}\var{X_2}}}\\
        &= \beta \frac{\sqrt{\var{X_1}}}{\sqrt{\var{X_2}}}\\
        &= \beta \frac{\sqrt{\var{X_1}}}{\sqrt{\beta^2 \var{X_1} + \sigma^2}}\\
        &= \frac{1}{\sqrt{1+\frac{\sigma^2}{\beta^2 \var{X_1}}}}
    \end{align*}
\end{proof}
In this work, we look at the effect of imputation accuracy ($X_1$) on prediction performance ($X_2$). Hence, in a case where the imputation accuracy $X_1$ covers a wider range of values, i.e., $\var{X_1}$ is larger, but the effect $\beta$ and the noise $\sigma^2$ stay the same, then the correlation between imputation accuracy and prediction performance increases.

\section{Critical Difference diagrams.}

\Cref{fig:cd_20_M,fig:cd_50_M,fig:cd_20_yM,fig:cd_50_yM} give the \textbf{Critical Difference diagrams across all predictors and imputers} of average score ranks for a significance level of 0.05. The difference in ranks for all methods covered by the same black crossbar are not statistically significant according to a Nemenyi test for multiple pairwise comparisons. The colors encode the imputation type, the markers identify the model, and the line types encode the presence or absence of an indicator.

\begin{figure}[h]
    \centering
    \includegraphics[scale=0.5]{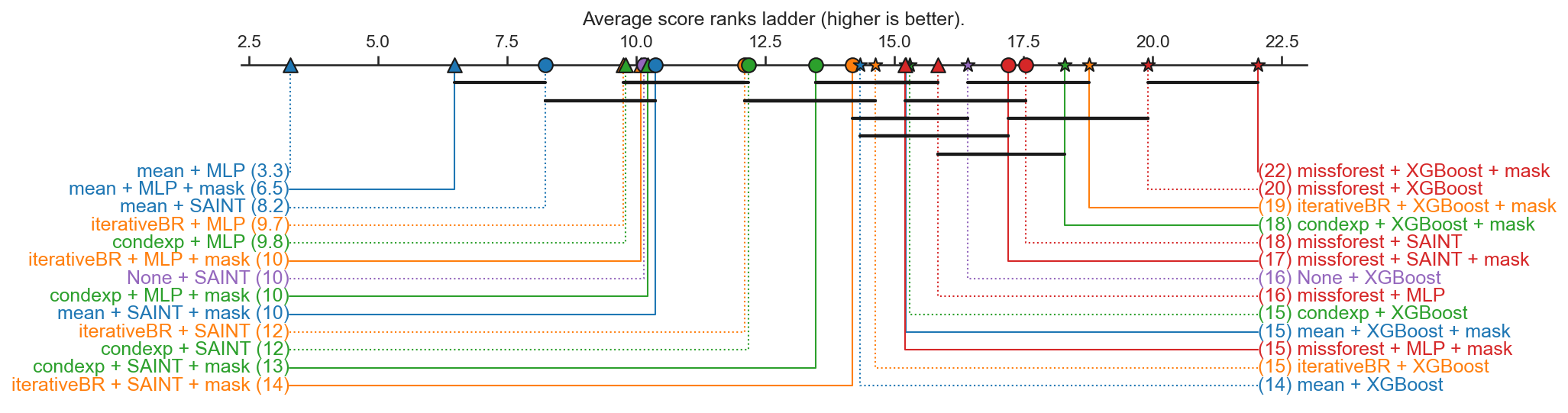}
    \caption{Critical Difference diagram - 20\% missingness rate.}
    \label{fig:cd_20_M}
\end{figure}

\begin{figure}[h]
    \centering
    \includegraphics[scale=0.5]{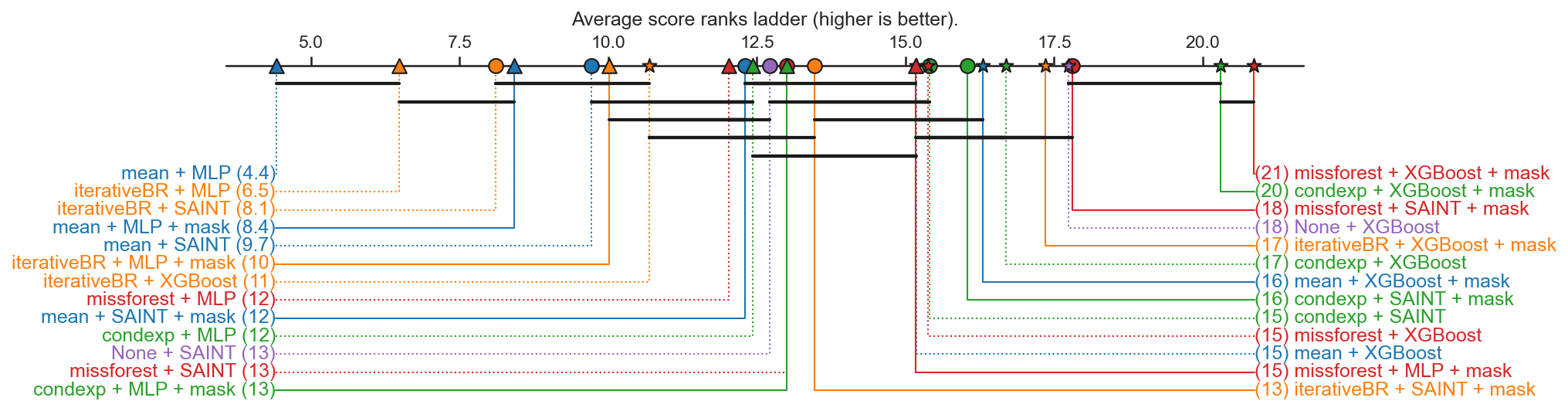}
    \caption{Critical Difference diagram - 50\% missingness rate.}
    \label{fig:cd_50_M}
\end{figure}

\begin{figure}[h]
    \centering
    \includegraphics[scale=0.5]{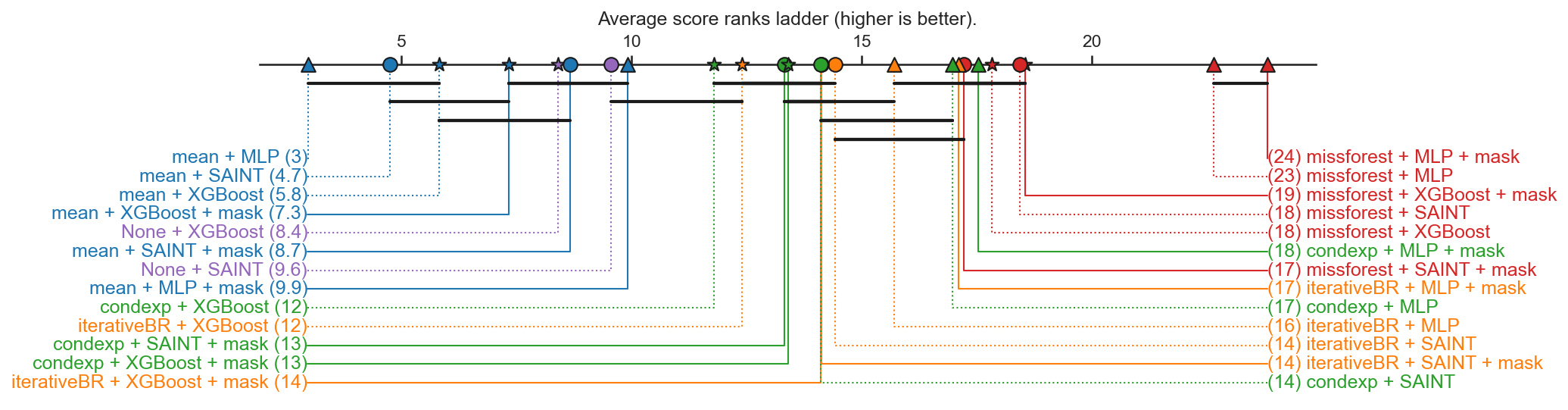}
    \caption{Critical Difference diagram - 20\% missingness rate, semi-synthetic data with linear outcomes.}
    \label{fig:cd_20_yM}
\end{figure}

\begin{figure}[h]
    \centering
    \includegraphics[scale=0.5]{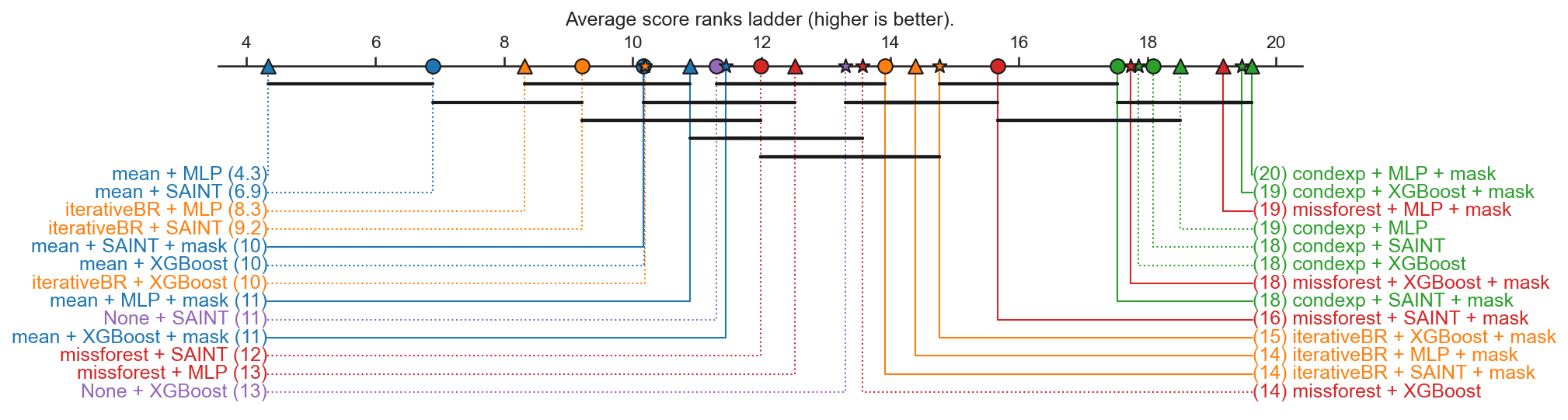}
    \caption{Critical Difference diagram - 50\% missingness rate, semi-synthetic data with linear outcomes.}
    \label{fig:cd_50_yM}
\end{figure}

\clearpage
\section{Prediction performances for the semi-synthetic data.}

\begin{figure}[h]
    \centering
    \includegraphics[width=\linewidth]{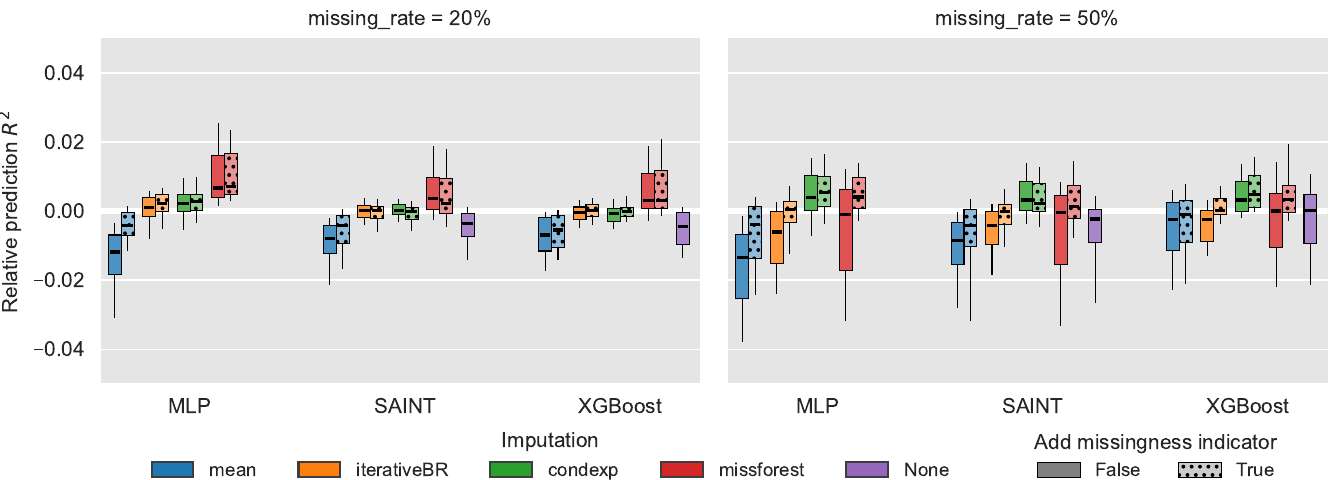}
    \caption{\textbf{Relative prediction performances \underline{for the semi-synthetic data with linear outcomes} across datasets for different imputations, predictors, and use of the missingness indicator.} Each boxplot represents 200 points (20 datasets with 10 repetitions per dataset). The performances shown are \rsq scores on the test set relative to the mean performance across all models for a given dataset and repetition. A value of 0.01 indicates that a given method outperforms the average performance on a given dataset by 0.01 on the \rsq score.}
    \label{fig:perfs_yM}
\end{figure}

\section{Regression slopes for the semi-synthetic data.}
\begin{figure}[h]
    \centering
    \includegraphics[scale=0.5]{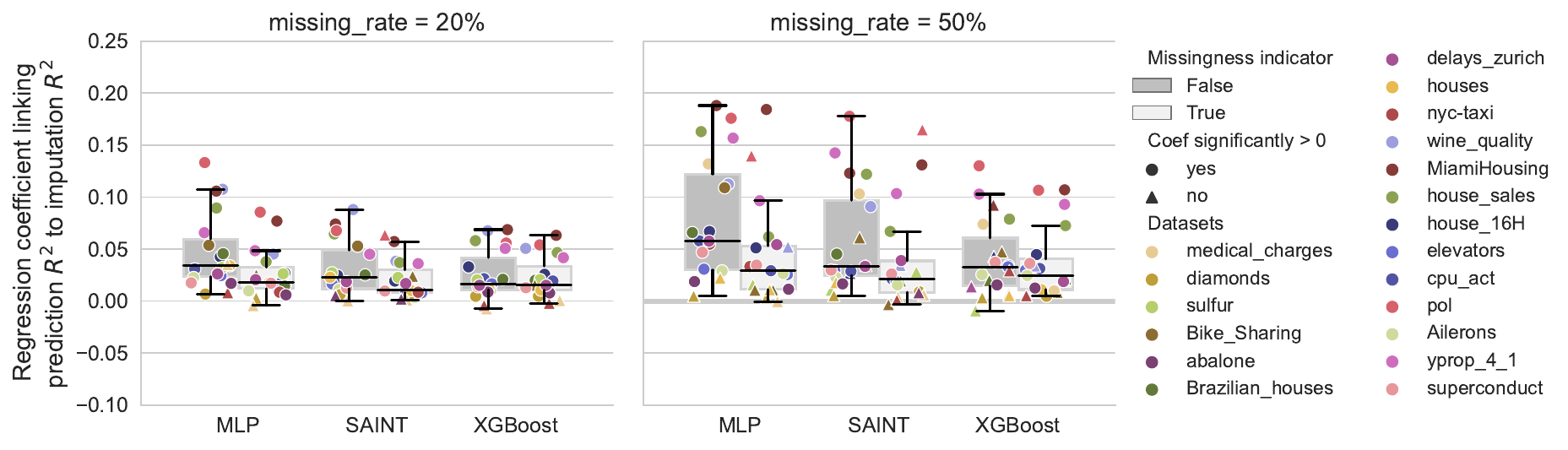}
    \caption{\textbf{Effect of the imputation recovery on the prediction performance} \underline{for the semi-synthetic data with linear outcomes}. We report the slope of the regression line where imputation quality is used to predict prediction performance.}
    \label{fig:effect_linear}
\end{figure}
\clearpage

\section{Effect of the missing rate.}
\begin{figure}[h]
    \centering
    \includegraphics[scale=0.8]{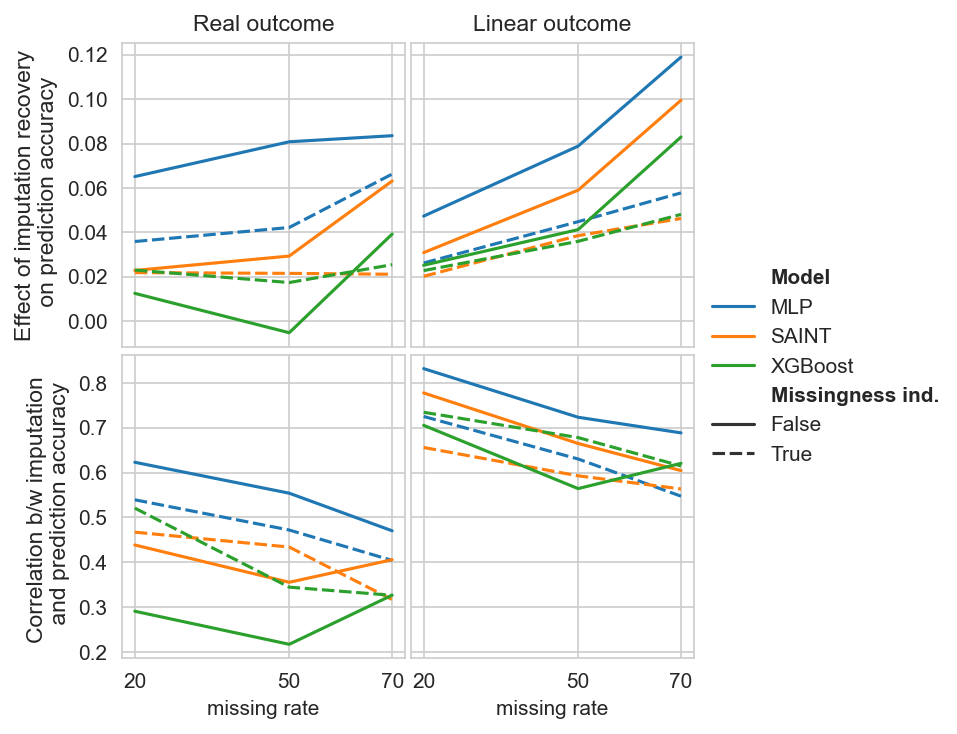}
    \caption{\textbf{Increasing missing rates lead to larger effects (slopes) but noisier associations (lower correlation)}. The values reported are median across datasets, for each model. Data is MCAR.}
    \label{fig:missing_rate_effect}
\end{figure}

\Cref{fig:missing_rate_effect} shows that effects are larger for higher missing rates: this is particularly clear for linear outcomes, but less for real outcomes. This suggests that imputation matters more at higher missing rates, although for real outcomes and powerful models, these effects are still very small. By contrast, correlations decrease when the missing rate increases, i.e, the association is noisier (less likely to be significant).

\clearpage

\section{Feature importance depending on the missingness status.}
\label{sec:feature_importance}
\begin{figure}[h]
    \centering
    \includegraphics[scale=0.6]{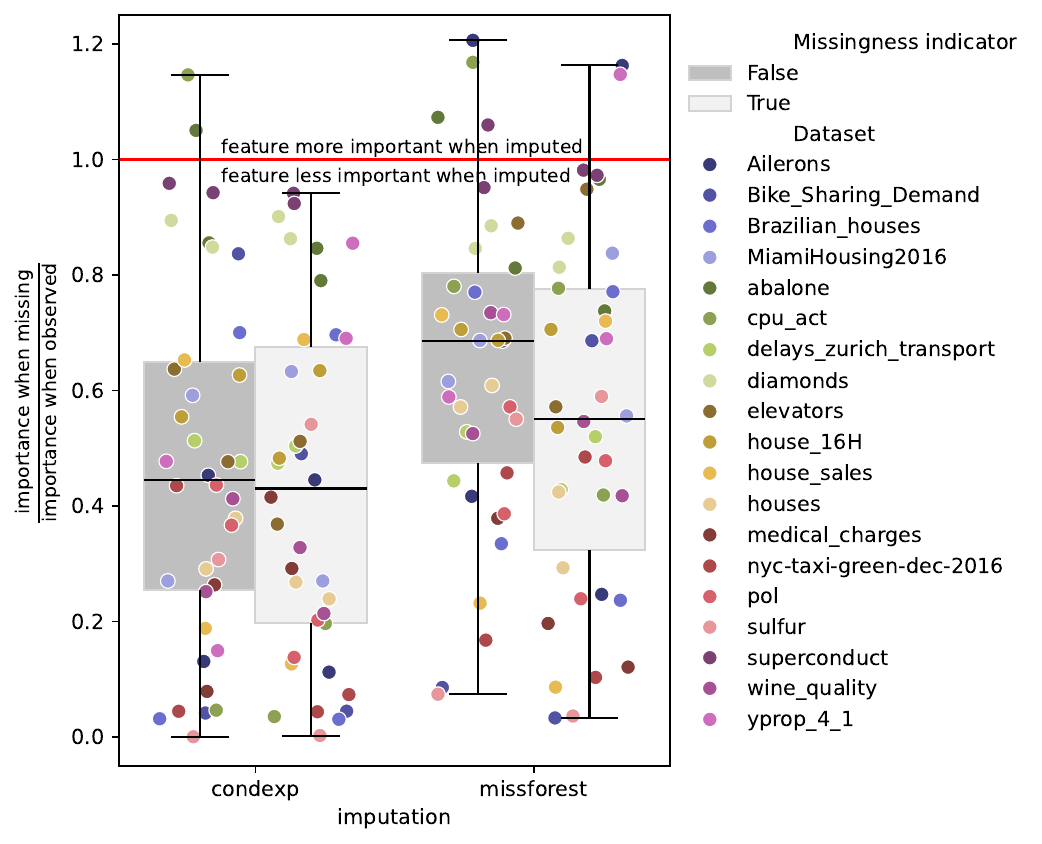}
    \caption{\textbf{Ratio of importances (when missing over when observed) for the two most important features of each dataset.} Importances are calculated with feature permutation. The 2 most important features per dataset are identified based on the whole test set. For each feature j, a permutation importance is then calculated based on the subset of test samples where feature $j$ is missing, and the subset where it is observed. The ratio between these two values is then reported on the figure, where a point refers to one feature, and its color identifies the dataset it belongs to. A ratio of 1 indicates that the feature is as important whether it is imputed or observed (red line). A ratio of 0.1 means that it is 10 times less important when it is imputed compared to when it is observed.}
    \label{fig:importance}
\end{figure}

This experiment was conducted using XGBoost with condexp or missforest imputation, both with and without the mask.
\Cref{fig:importance} indicates that on average, a feature is half as important when imputed compared to when observed, with considerable variability (i.e., many features are 10 times less important when imputed, and some features remain as important when imputed). When a mask is used, importances drop significantly more with missforest imputation compared to when no mask is used (Wilcoxon signed-rank test p-value < 0.01). However, this effect is not significant with condexp imputation.

\clearpage

\section{Investigating the role of the missingness indicator.}

\subsection{Prediction performance gains when using the missingness indicator versus imputation accuracy}

\begin{figure}[h]
    \centering
    % Subfigure 1
    \begin{subfigure}[b]{\textwidth}
        \centering
        \includegraphics[width=0.8\linewidth]{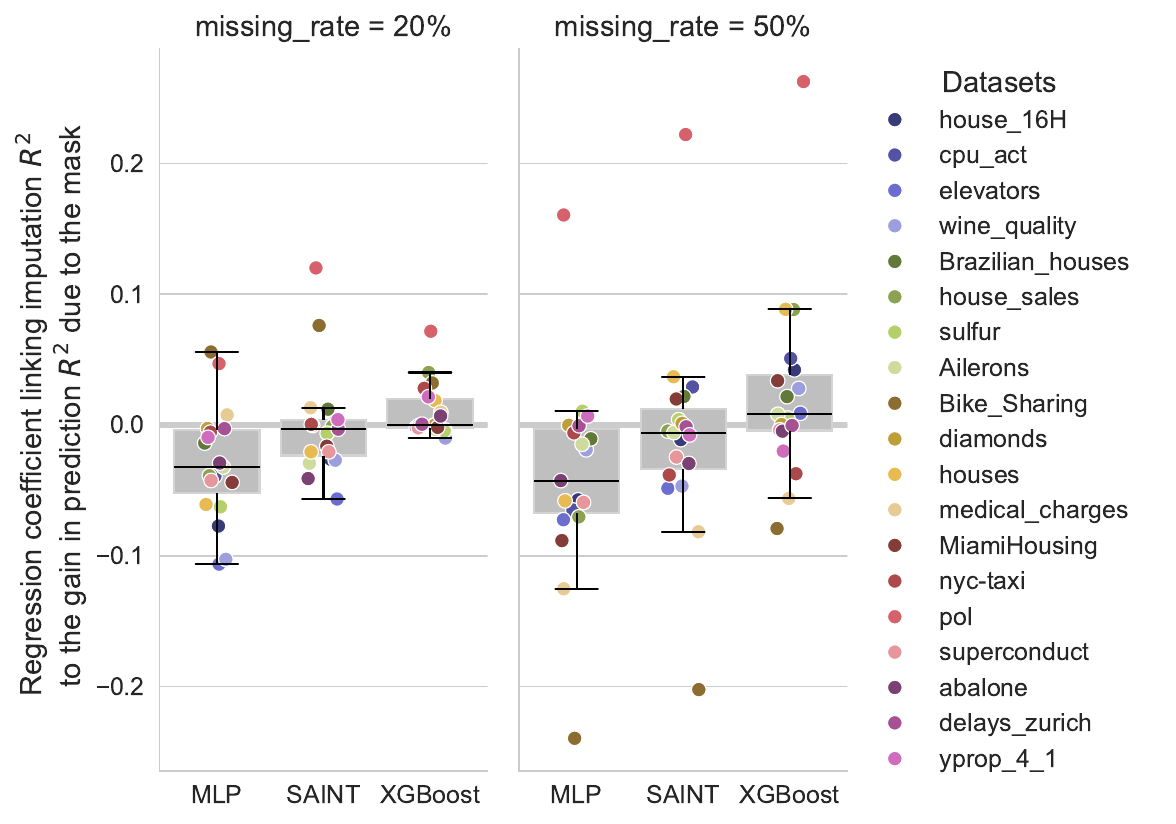} % Replace with your image file
        \caption{Real outcome.}
        \label{fig:sub1}
    \end{subfigure}
    \hfill
    % Subfigure 2
    \begin{subfigure}[b]{\textwidth}
        \centering
        \includegraphics[width=0.8\linewidth]{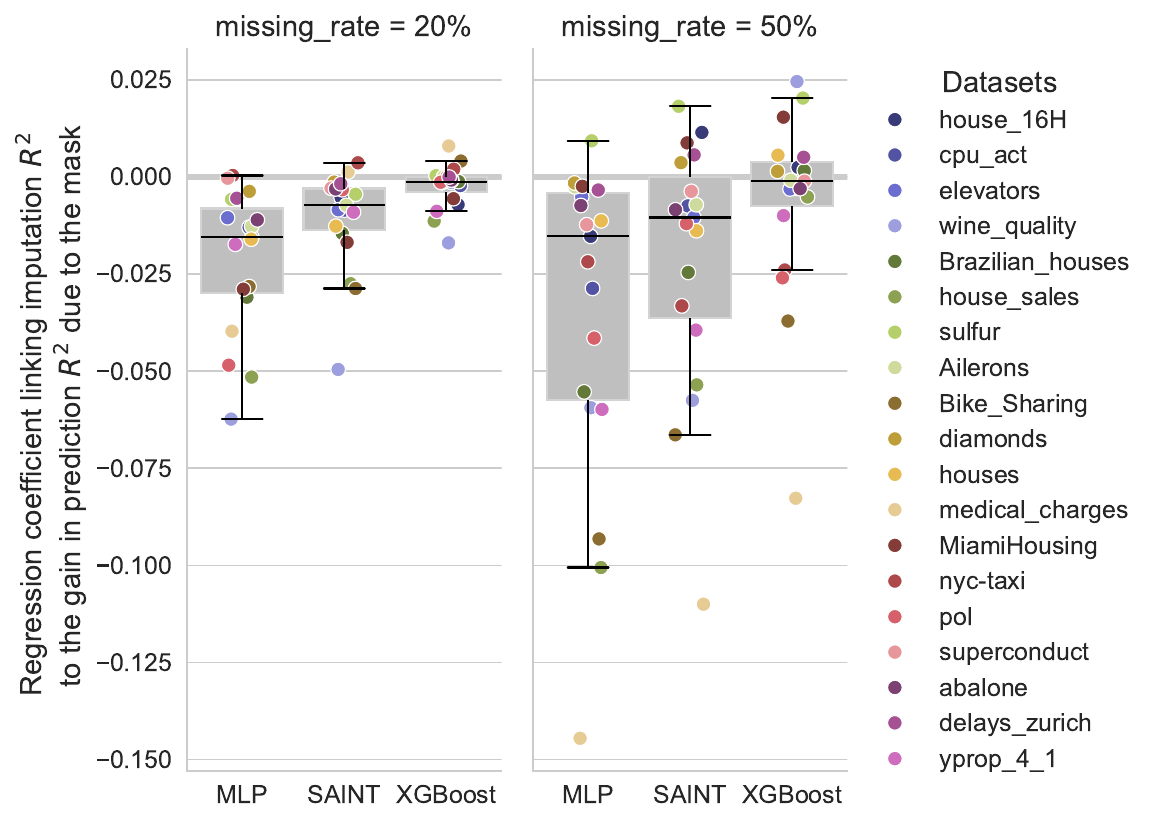} % Replace with your image file
        \caption{Semi-synthetic linear outcome.}
        \label{fig:sub2}
    \end{subfigure}
    
    % \caption{\textbf{Correlation between imputation accuracy and improvement in prediction $\bm{\rsq}$ score when using the mask, compared to not using it.}}
    \caption{\textbf{Effect of imputation accuracy on the improvement in prediction when using the mask, compared to not using it.}}
    \label{fig:gain_from_mask}
\end{figure}

Most effects are negative, indicating that using the missingness indicator brings the largest boost in prediction performance when imputations have low accuracy. Moreover, effects are strongest for the MLP and smallest for XGBoost, meaning that with more powerful models, prediction boosts due to the missingness indicator are less pronounced.

\clearpage

\subsection{Shuffling the missingness indicator.}

\begin{figure}[h]
    \centering
    % Subfigure 1
    \begin{subfigure}[b]{\textwidth}
        \centering
        \includegraphics[width=0.75\linewidth]{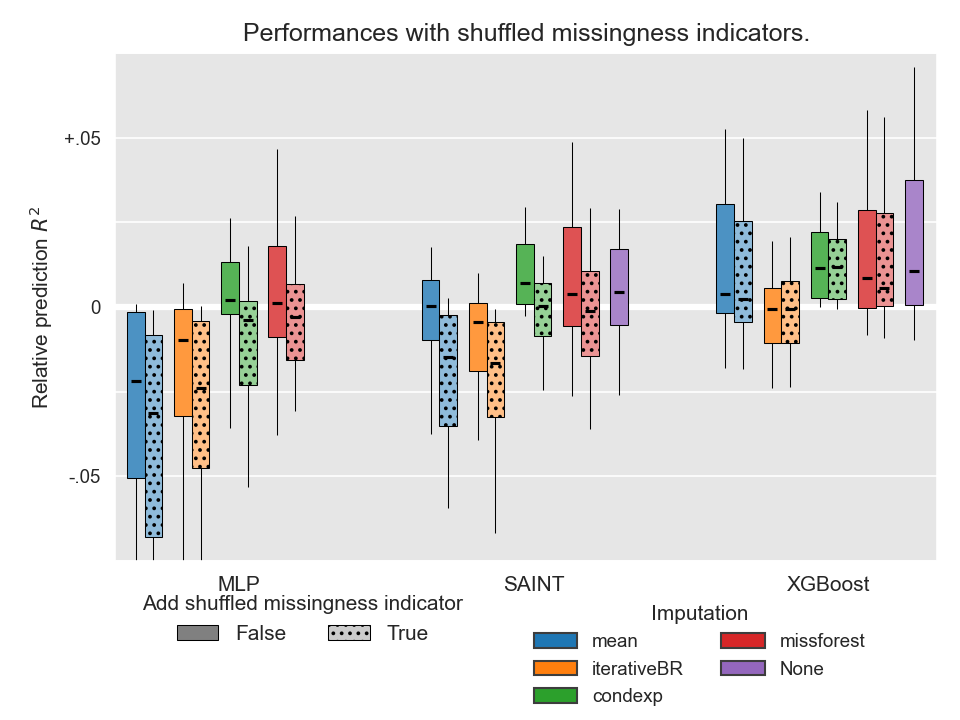}
        \caption{Effect of appending a shuffled mask on prediction performances. Real outcome, 50\% MCAR missingness.}
        \label{fig:shuffle_sub1}
    \end{subfigure}
    \hfill
    % Subfigure 2
    \begin{subfigure}[b]{\textwidth}
        \centering
        \includegraphics[width=\linewidth]{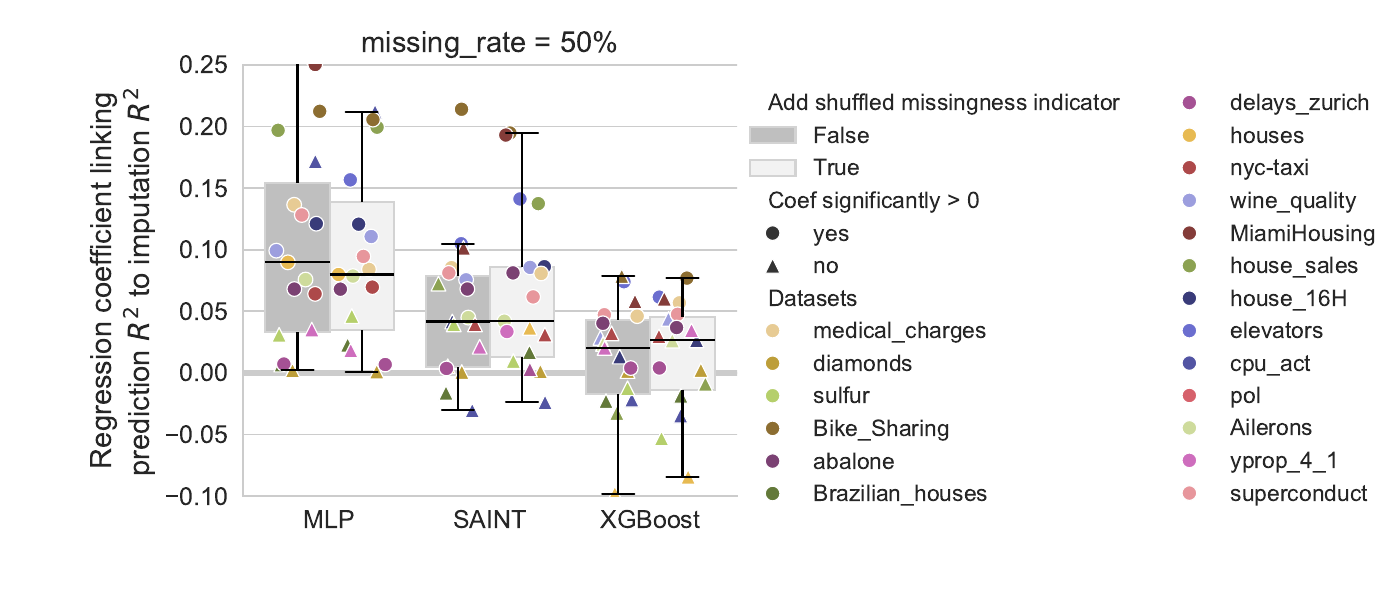}
        \caption{Comparing effect sizes of imputation accuracy on prediction accuracy, with a shuffled mask versus without mask. Real outcome, 50\% MCAR missingness.}
        \label{fig:shuffle_sub2}
    \end{subfigure}
    
    \caption{\textbf{Effects of appending a shuffled mask}}
    \label{fig:shuffle_shuffle}
\end{figure}

We repeat experiments with a shuffled missingness indicator, where the columns of the indicator are shuffled for each sample. This preserves the total number of missing values per sample but removes information about which specific features are missing.

\Cref{fig:shuffle_sub1} demonstrates that using a shuffled missingness indicator harms prediction performance, except for XGBoost for which performances are unchanged. In contrast, the true missingness indicator improves performances (\cref{fig:perfs}). Furthermore, the shuffled indicator does not affect the relationship between imputation accuracy and prediction accuracy (\cref{fig:shuffle_sub2}), whereas the true indicator reduces the effect size (\cref{fig:effect}).

These results confirm that the benefit of the missingness indicator is not due to a regularization or merely encoding the number of missing values. Prediction models effectively leverage information about which features are missing, even though under MCAR, this information is unrelated to the unobserved values.

\clearpage

\section{Computation times per method.}

\begin{figure}[h]
    \centering
    \includegraphics[width=\linewidth]{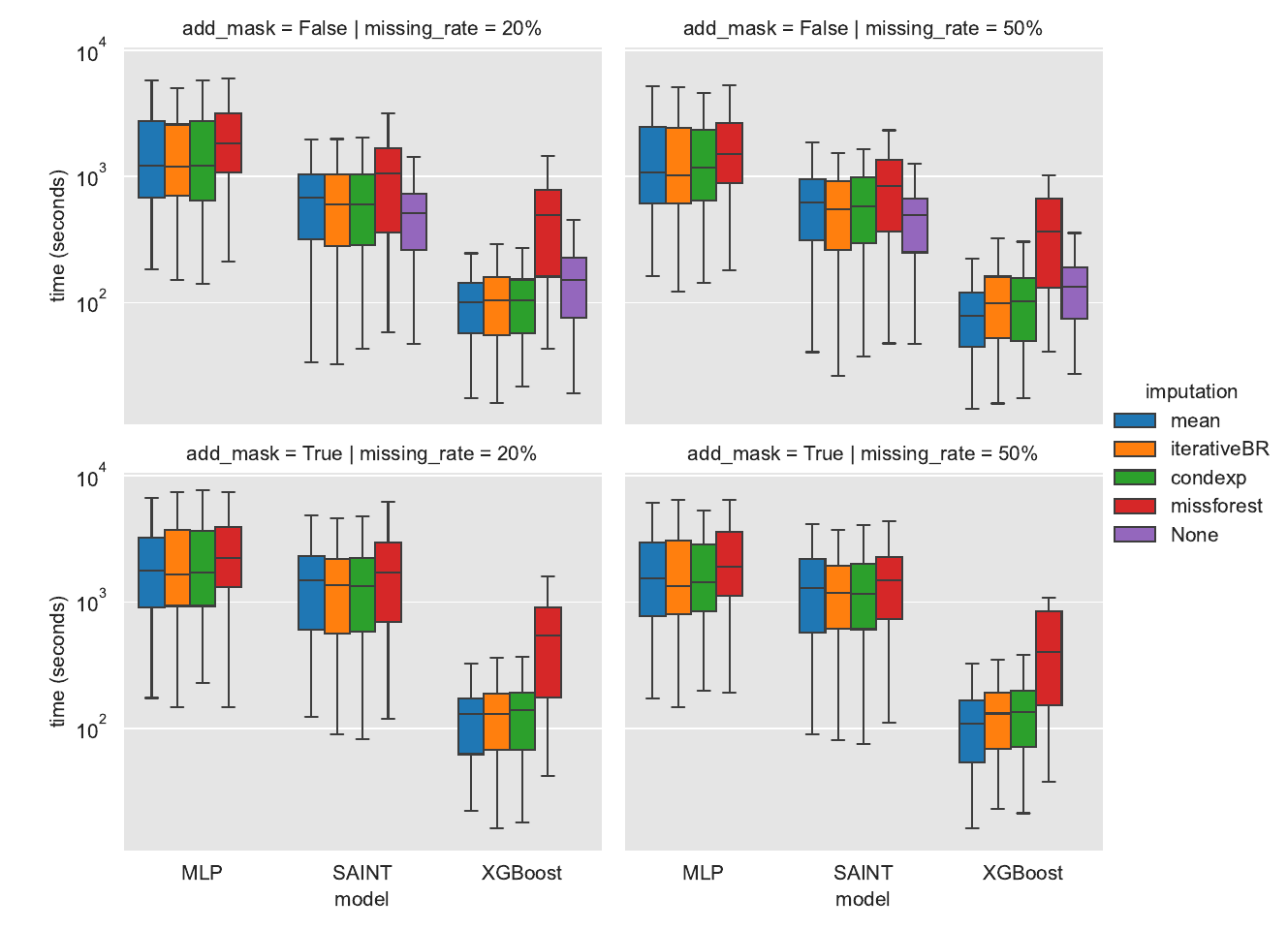}
    \caption{\textbf{Running time for each model}, including the 50 iterations of hyperparameter search for XGBoost and MLP.}
    \label{fig:enter-label}
\end{figure}

\begin{figure}[h]
    \centering
    \includegraphics[width=\linewidth]{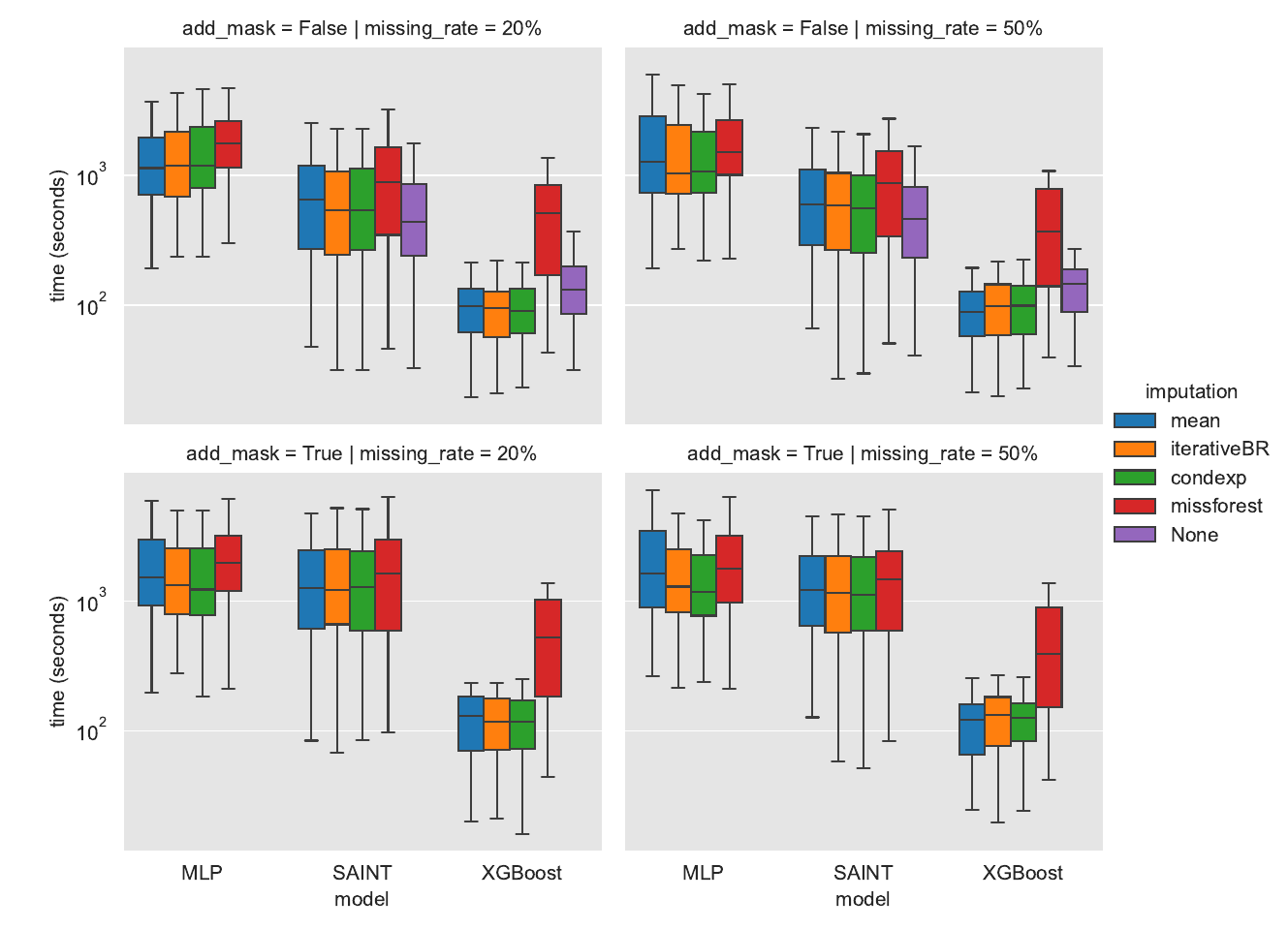}
    \caption{\textbf{Running time for each model} \underline{for the semi-synthetic data with linear outcomes}, including the 50 iterations of hyperparameter search for XGBoost and MLP.}
    \label{fig:enter-label}
\end{figure}
\clearpage

\section{Scatterplots of prediction $R^2$ vs imputation $R^2$ for each model and dataset.}

% Real outcomes.

\begin{figure}[h]
    \centering
    \includegraphics[scale=0.6]{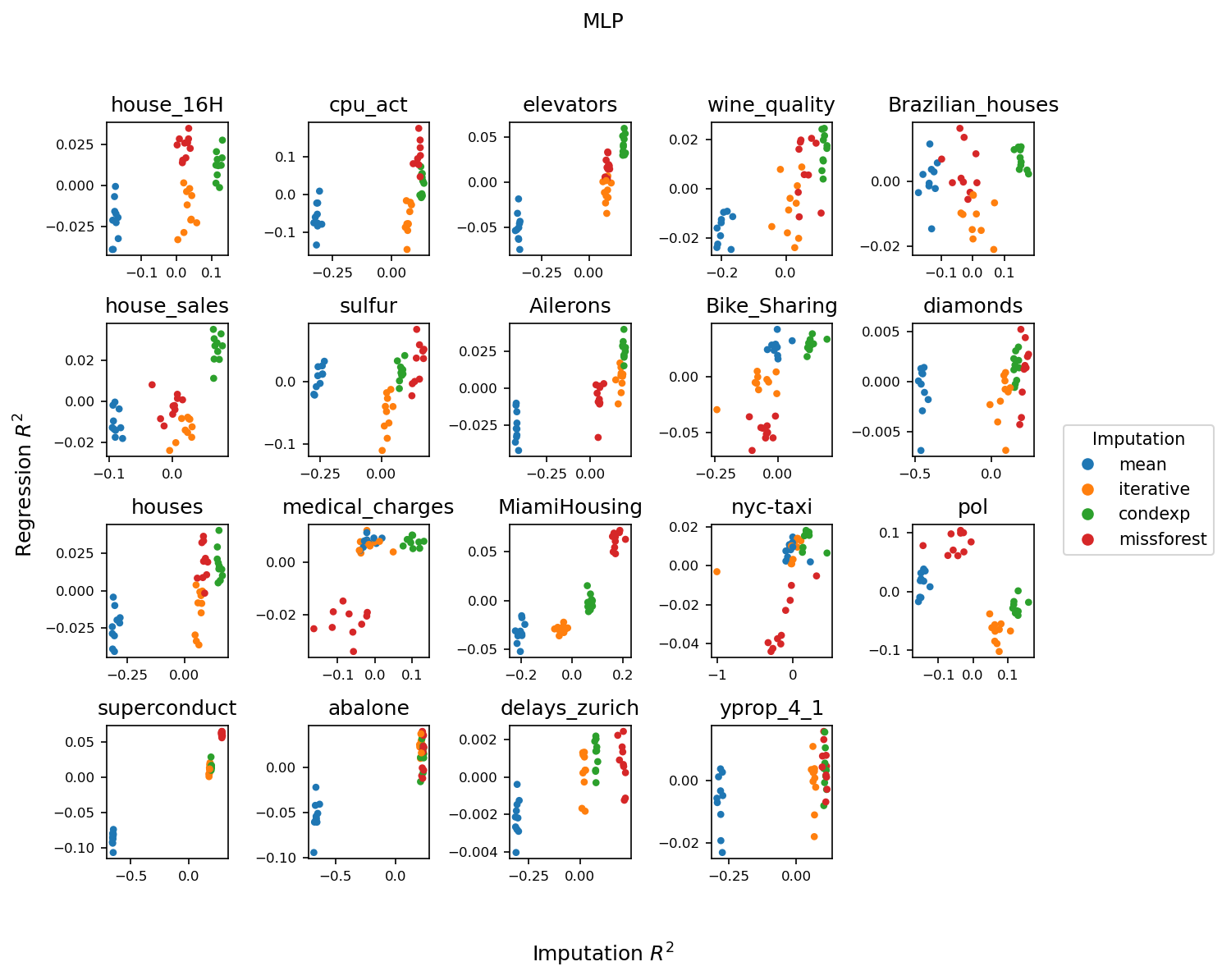}
    \caption{\textbf{Prediction $R^2$ vs imputation $R^2$ for a MLP} - missing rate 50\%. The \rsq scores are given relative to the mean \rsq score, with the effects of experiment repetitions eliminated (i.e. the effect of the train/test splits on the performance)}
    \label{fig:scatterplot1}
\end{figure}

\begin{figure}[h]
    \centering
    \includegraphics[scale=0.6]{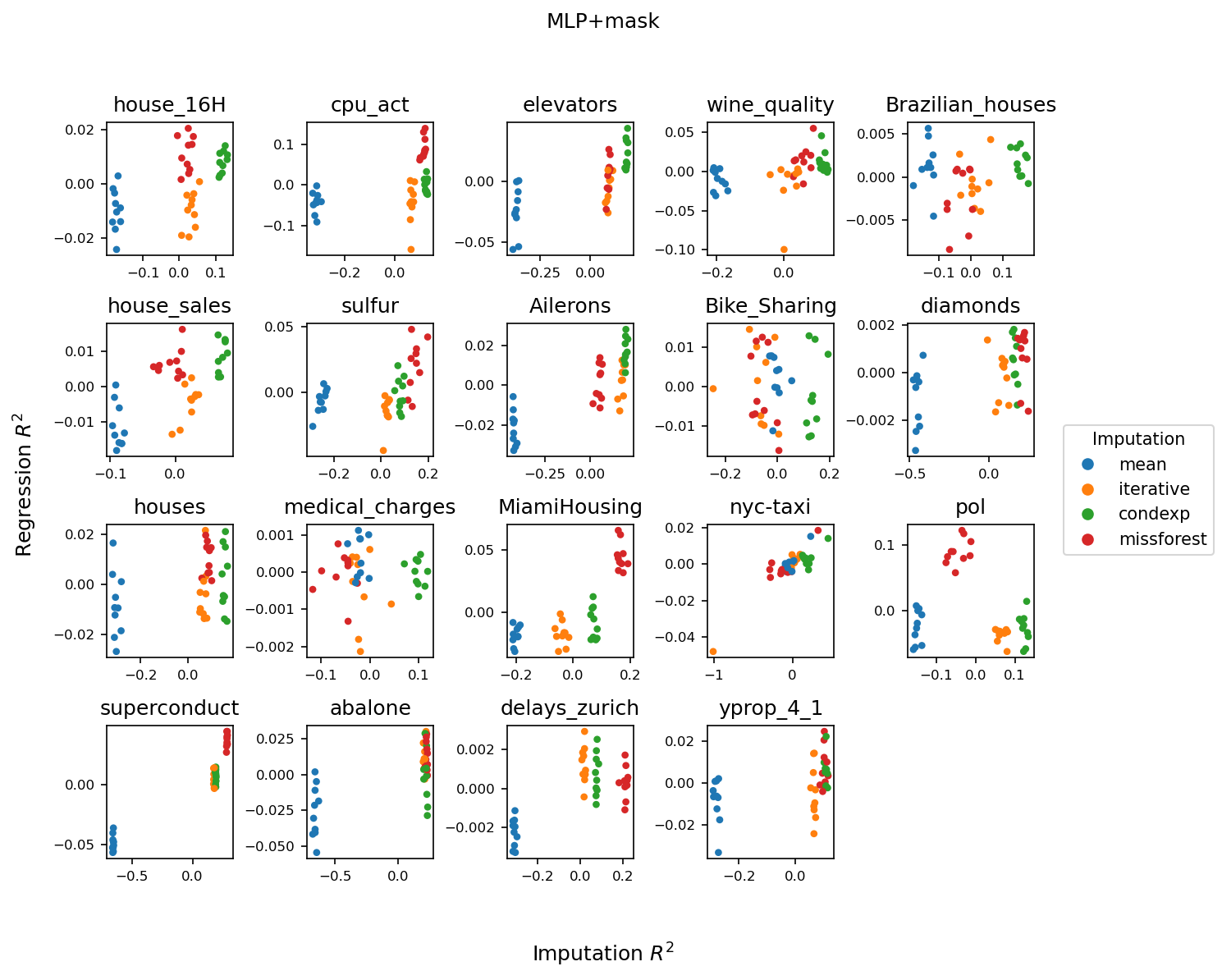}
    \caption{\textbf{Prediction $R^2$ vs imputation $R^2$ for a MLP + indicator} - missing rate 50\%. The \rsq scores are given relative to the mean \rsq score, with the effects of experiment repetitions eliminated (i.e. the effect of the train/test splits on the performance)}
\end{figure}

\begin{figure}[h]
    \centering
    \includegraphics[scale=0.6]{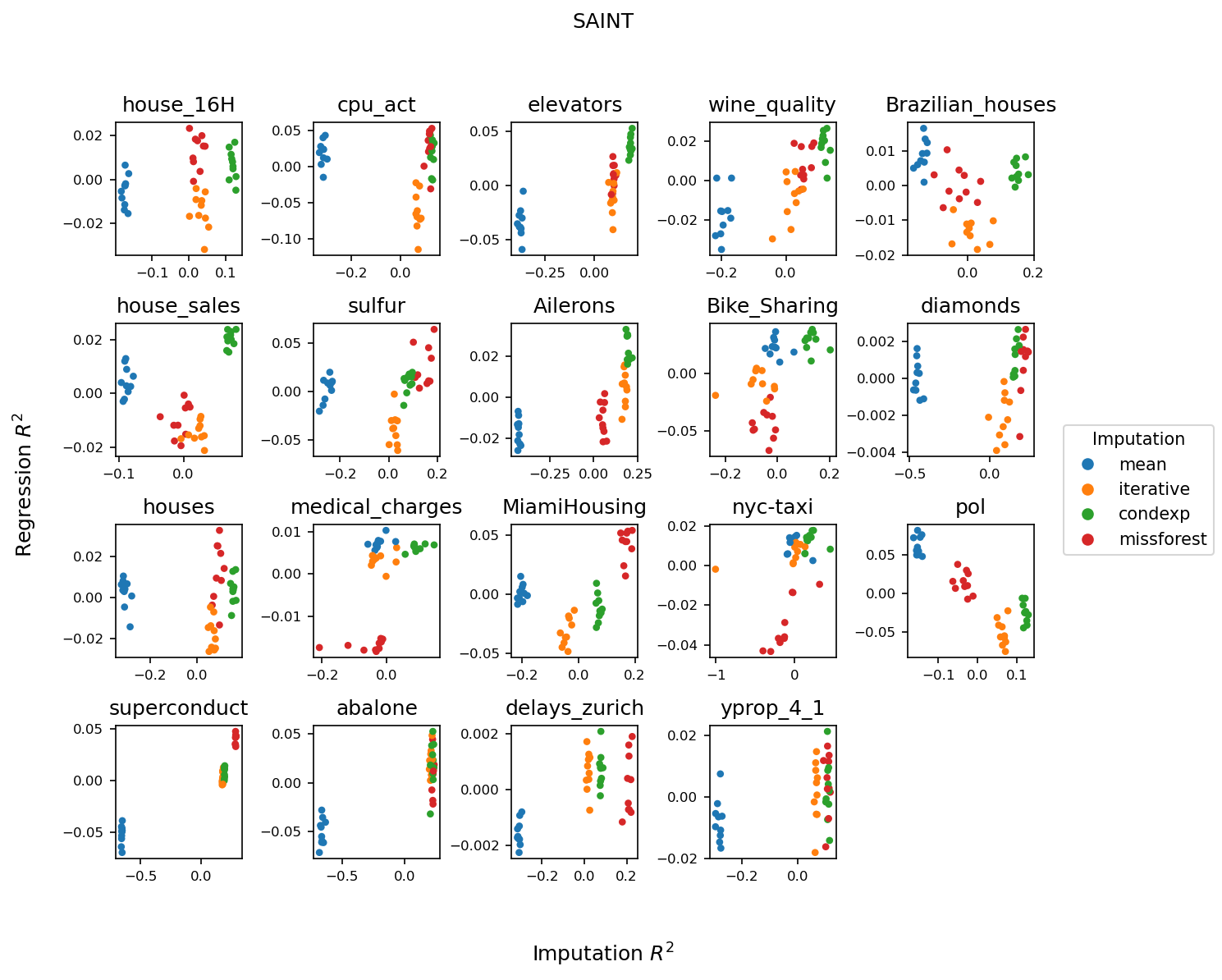}
    \caption{\textbf{Prediction $R^2$ vs imputation $R^2$ for SAINT} - missing rate 50\%. The \rsq scores are given relative to the mean \rsq score, with the effects of experiment repetitions eliminated (i.e. the effect of the train/test splits on the performance)}
\end{figure}

\begin{figure}[h]
    \centering
    \includegraphics[scale=0.6]{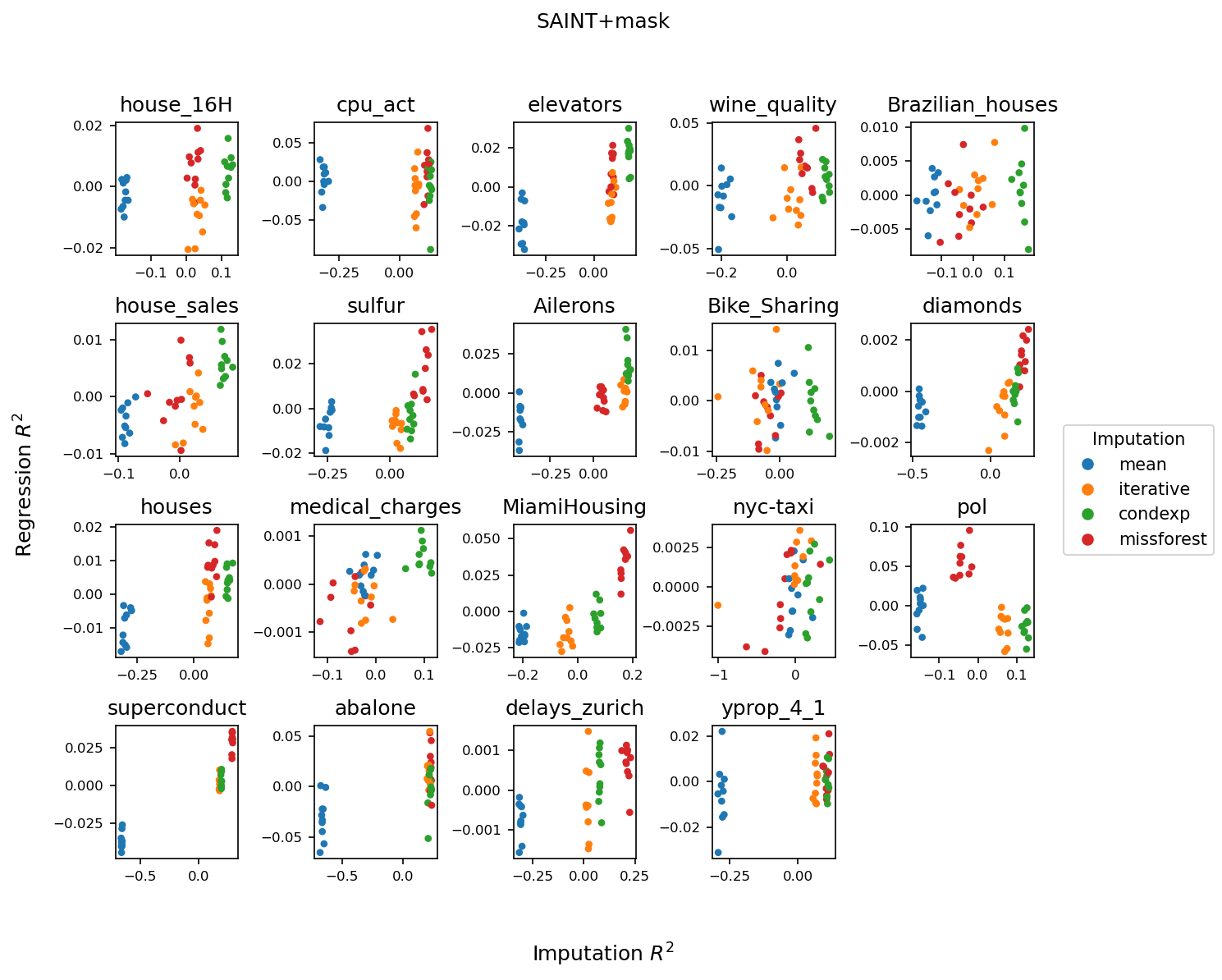}
    \caption{\textbf{Prediction $R^2$ vs imputation $R^2$ for SAINT + indicator} - missing rate 50\%. The \rsq scores are given relative to the mean \rsq score, with the effects of experiment repetitions eliminated (i.e. the effect of the train/test splits on the performance)}
\end{figure}

\begin{figure}[h]
    \centering
    \includegraphics[scale=0.6]{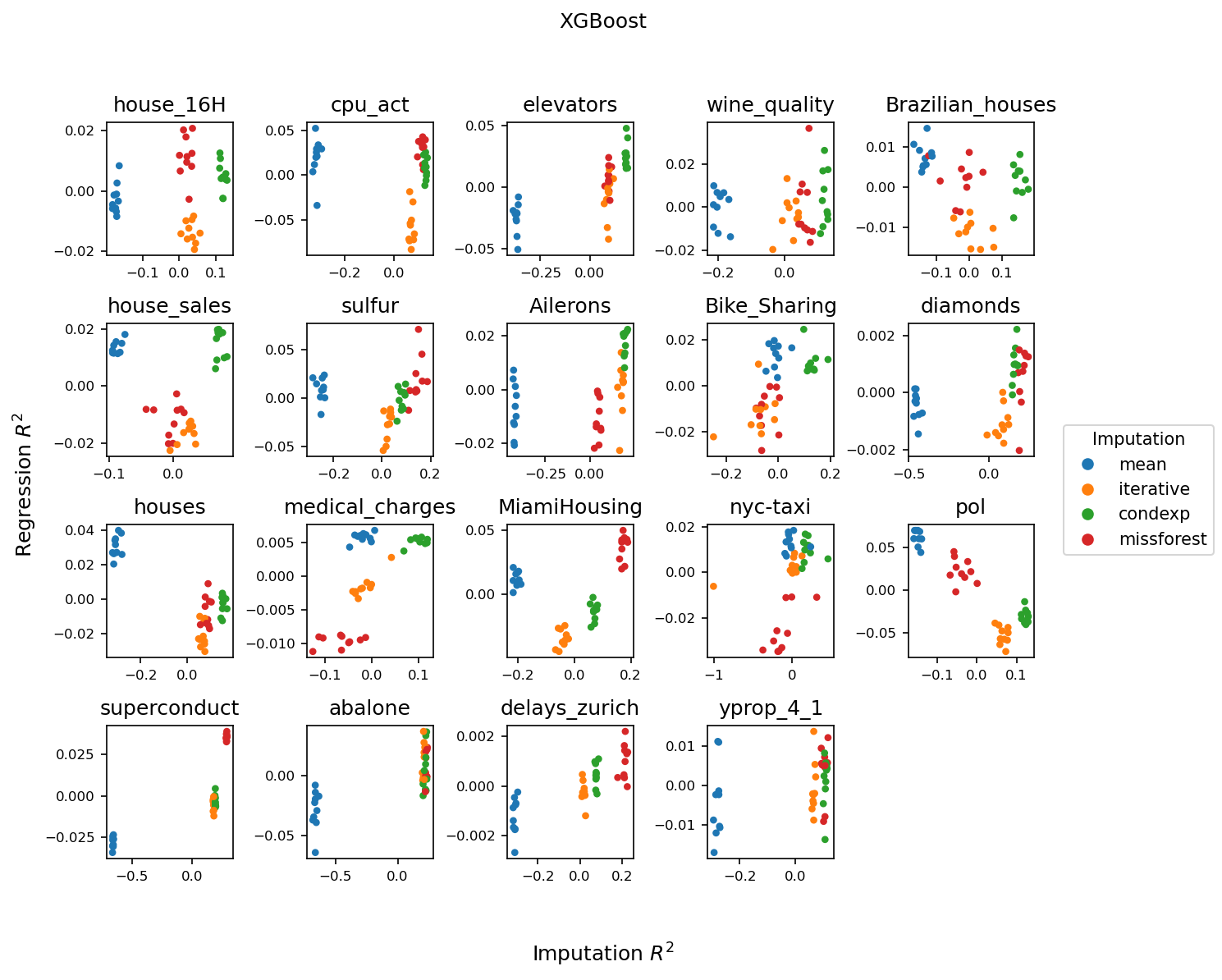}
    \caption{\textbf{Prediction $R^2$ vs imputation $R^2$ for XGBoost} - missing rate 50\%. The \rsq scores are given relative to the mean \rsq score, with the effects of experiment repetitions eliminated (i.e. the effect of the train/test splits on the performance)}
\end{figure}

\begin{figure}[h]
    \centering
    \includegraphics[scale=0.6]{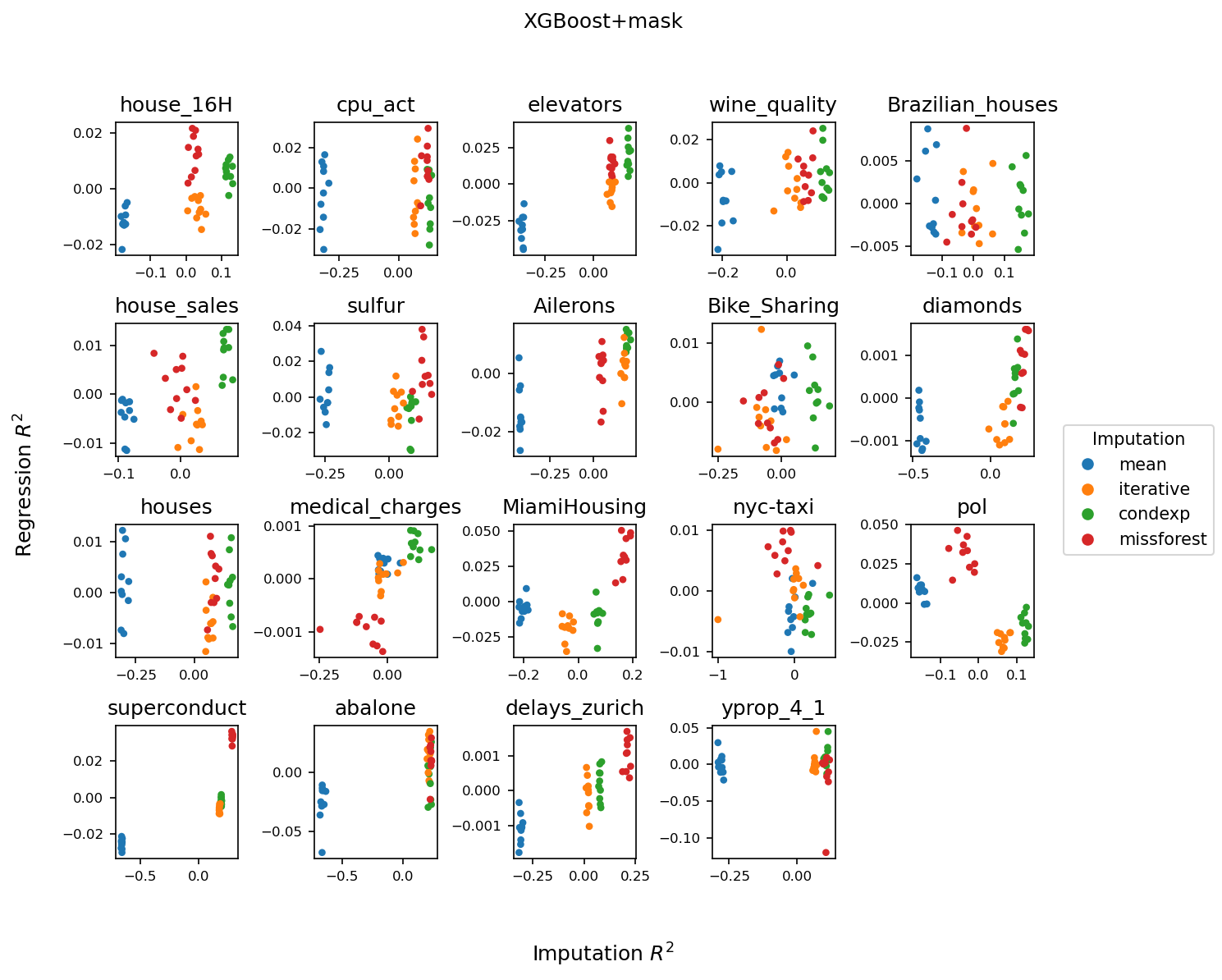}
    \caption{\textbf{Prediction $R^2$ vs imputation $R^2$ for XGBoost + indicator} - missing rate 50\%. The \rsq scores are given relative to the mean \rsq score, with the effects of experiment repetitions eliminated (i.e. the effect of the train/test splits on the performance)}
\end{figure}

% Semi-synthetic data with linear outcomes
\begin{figure}[h]
    \centering
    \includegraphics[scale=0.6]{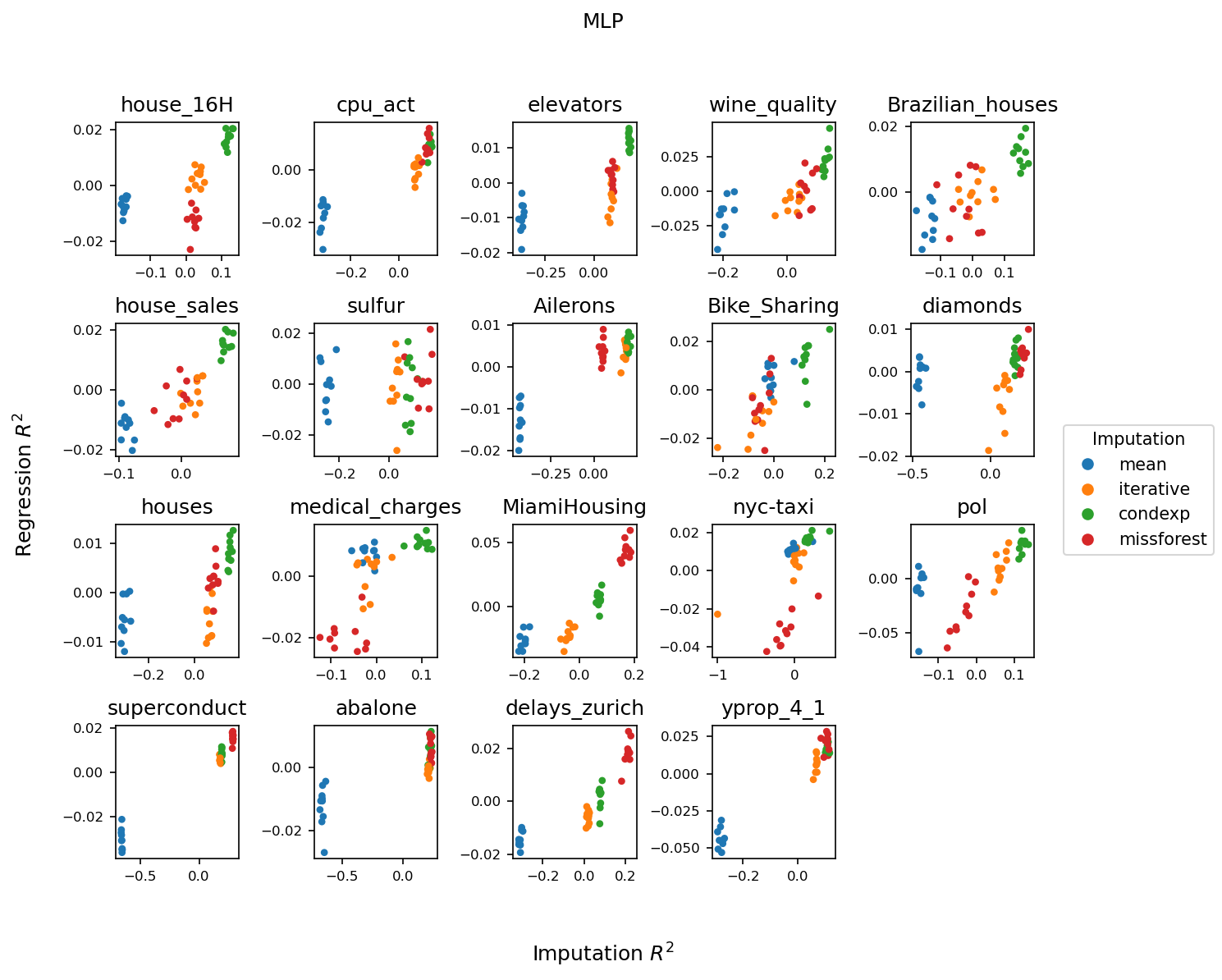}
    \caption{\textbf{Prediction $R^2$ vs imputation $R^2$ for a MLP} - \underline{semi-synthetic data with linear outcomes}, missing rate 50\%. The \rsq scores are given relative to the mean \rsq score, with the effects of experiment repetitions eliminated (i.e. the effect of the train/test splits on the performance)}
    \label{fig:scatterplot1}
\end{figure}

\begin{figure}[h]
    \centering
    \includegraphics[scale=0.6]{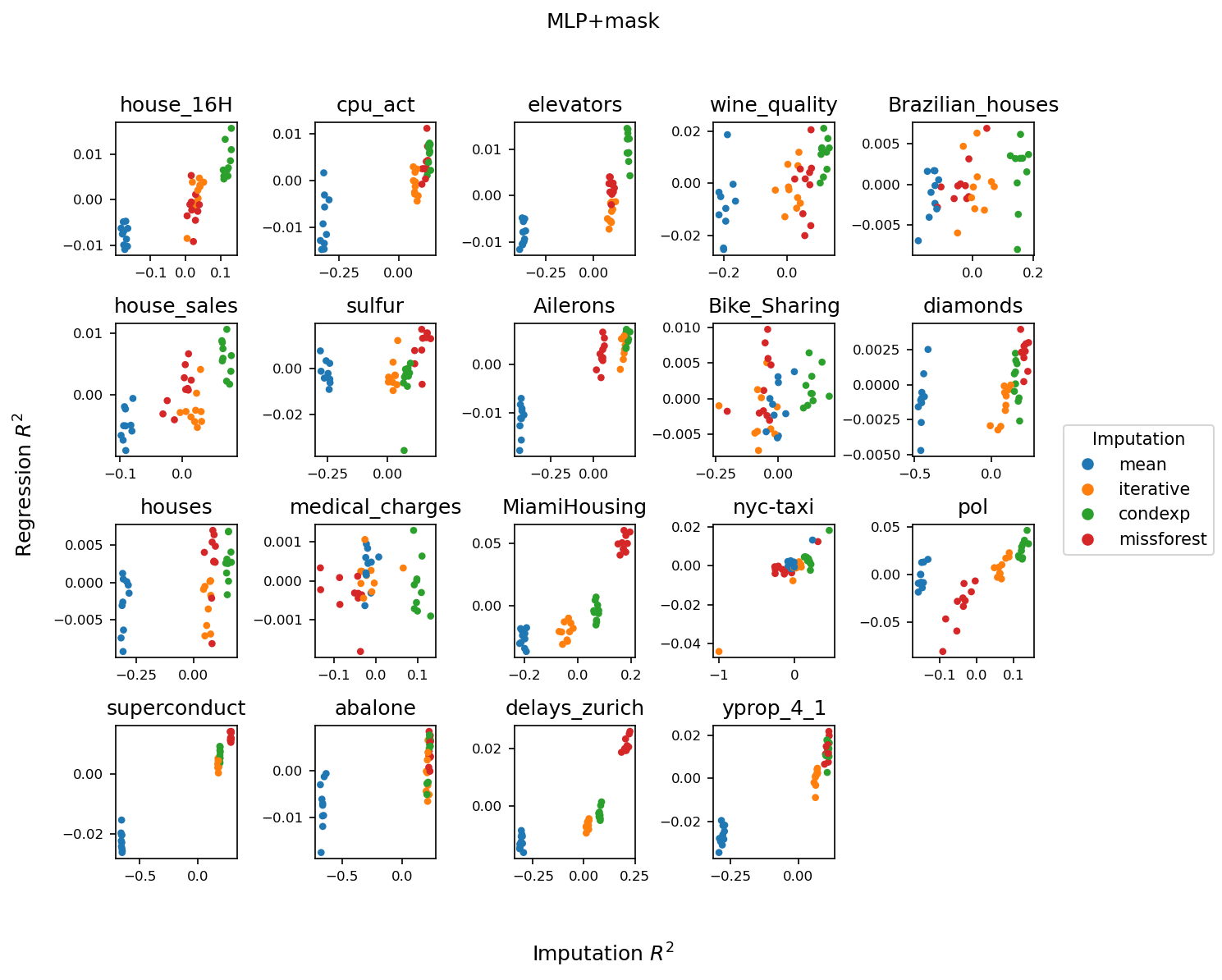}
    \caption{\textbf{Prediction $R^2$ vs imputation $R^2$ for a MLP + indicator} - \underline{semi-synthetic data with linear outcomes}, missing rate 50\%. The \rsq scores are given relative to the mean \rsq score, with the effects of experiment repetitions eliminated (i.e. the effect of the train/test splits on the performance)}
\end{figure}

\begin{figure}[h]
    \centering
    \includegraphics[scale=0.6]{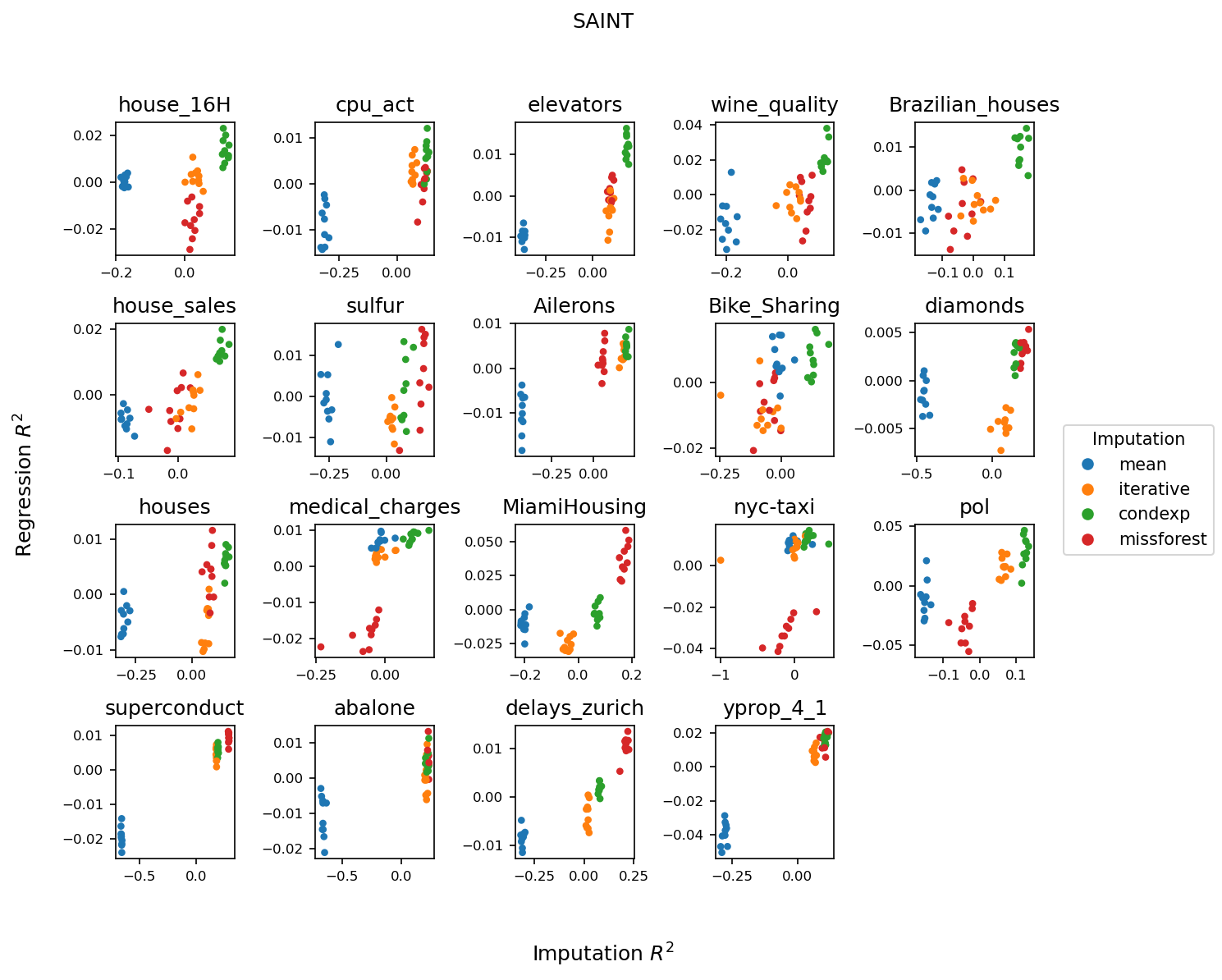}
    \caption{\textbf{Prediction $R^2$ vs imputation $R^2$ for SAINT} - \underline{semi-synthetic data with linear outcomes}, missing rate 50\%. The \rsq scores are given relative to the mean \rsq score, with the effects of experiment repetitions eliminated (i.e. the effect of the train/test splits on the performance)}
\end{figure}

\begin{figure}[h]
    \centering
    \includegraphics[scale=0.6]{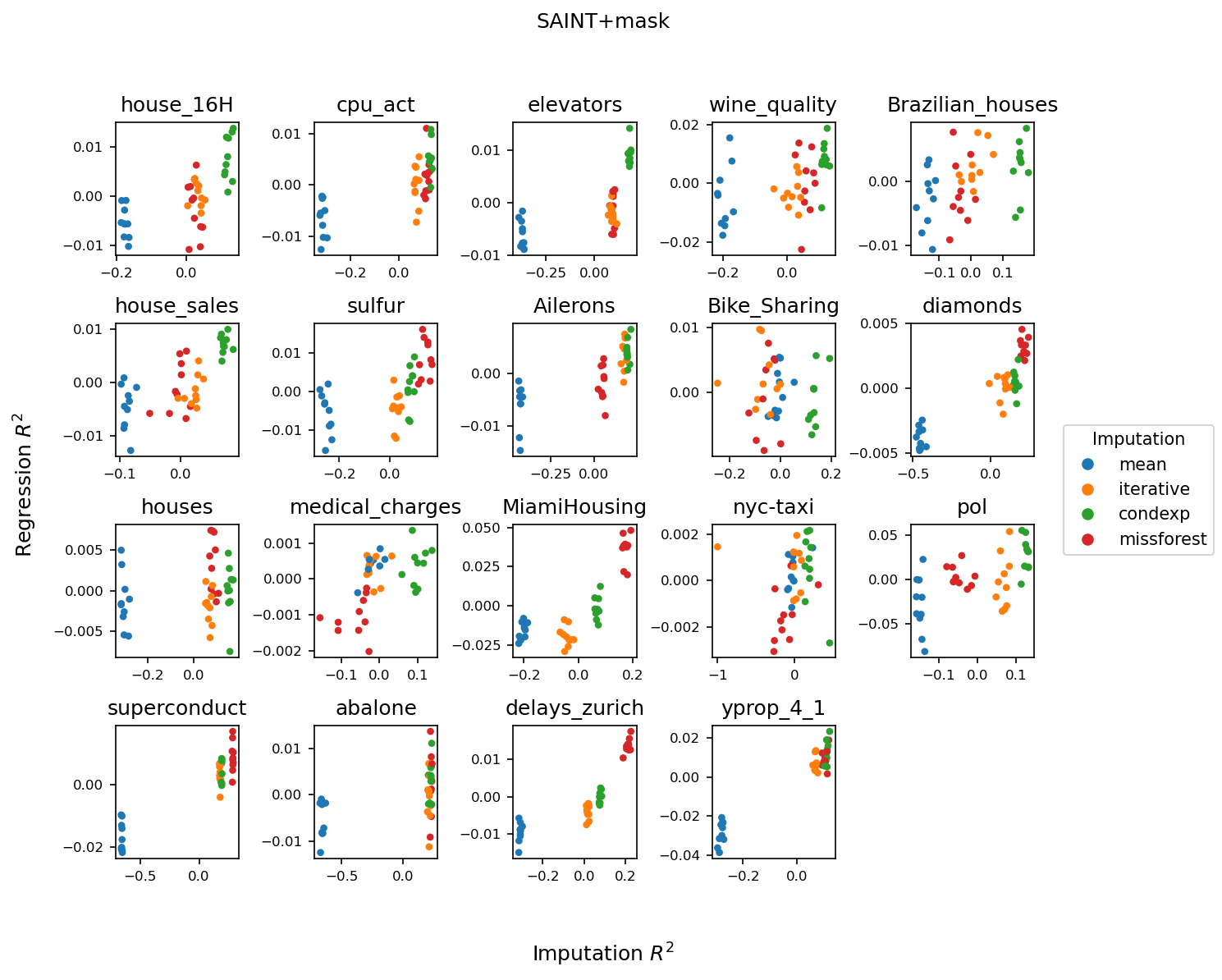}
    \caption{\textbf{Prediction $R^2$ vs imputation $R^2$ for SAINT + indicator} - \underline{semi-synthetic data with linear outcomes}, missing rate 50\%. The \rsq scores are given relative to the mean \rsq score, with the effects of experiment repetitions eliminated (i.e. the effect of the train/test splits on the performance)}
\end{figure}

\begin{figure}[h]
    \centering
    \includegraphics[scale=0.6]{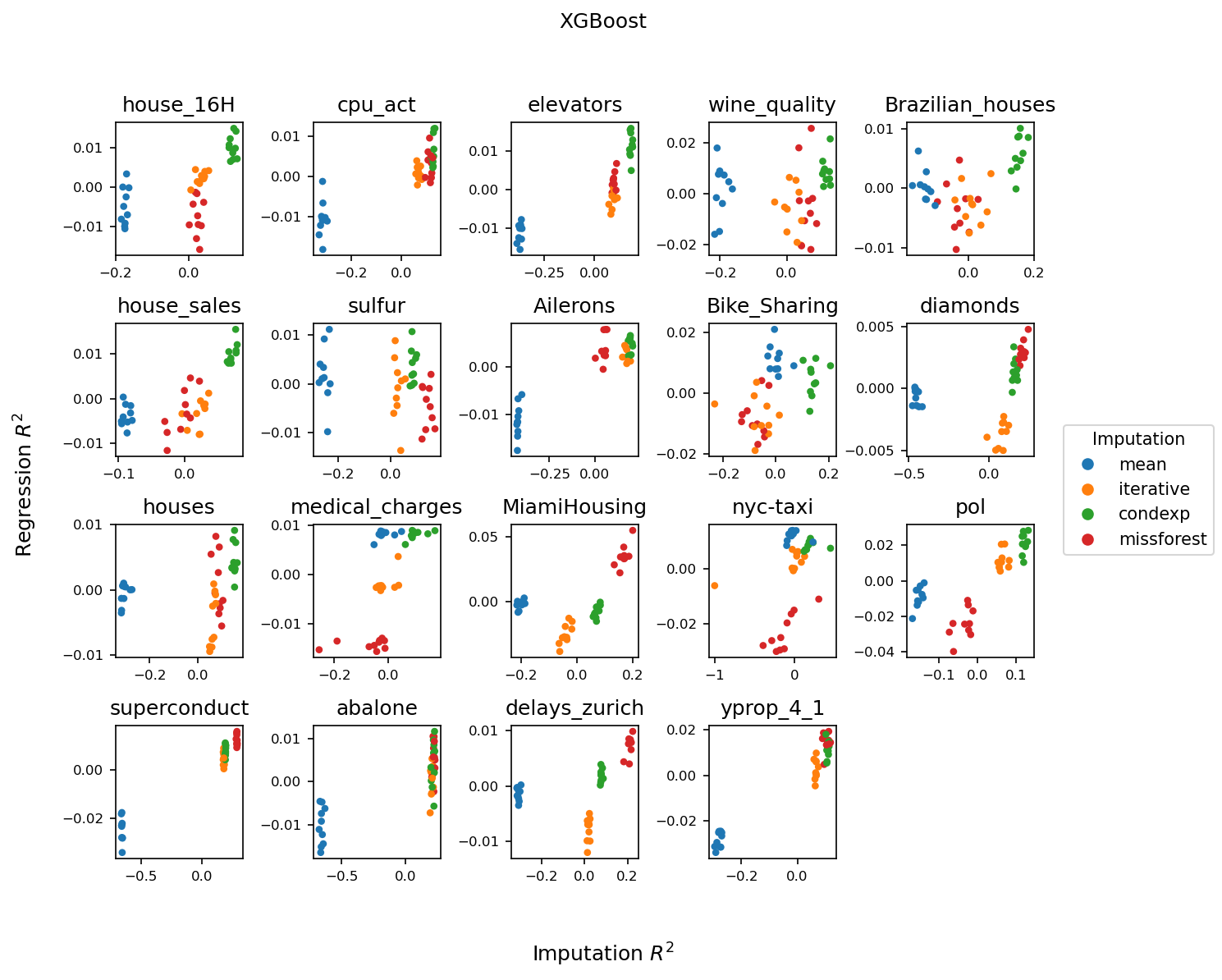}
    \caption{\textbf{Prediction $R^2$ vs imputation $R^2$ for XGBoost} - \underline{semi-synthetic data with linear outcomes}, missing rate 50\%. The \rsq scores are given relative to the mean \rsq score, with the effects of experiment repetitions eliminated (i.e. the effect of the train/test splits on the performance)}
\end{figure}

\begin{figure}[h]
    \centering
    \includegraphics[scale=0.6]{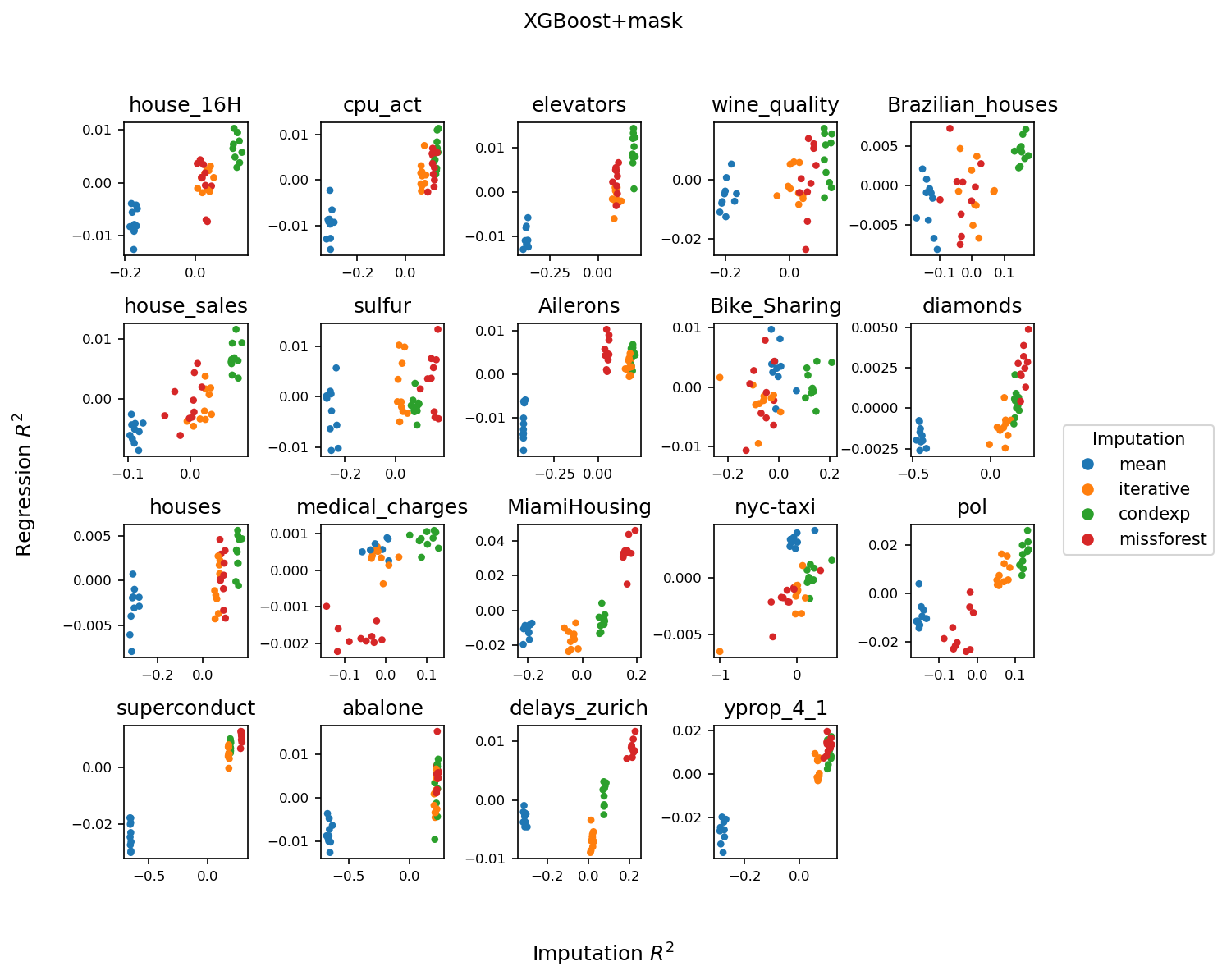}
    \caption{\textbf{Prediction $R^2$ vs imputation $R^2$ for XGBoost + indicator} - \underline{semi-synthetic data with linear outcomes}, missing rate 50\%. The \rsq scores are given relative to the mean \rsq score, with the effects of experiment repetitions eliminated (i.e. the effect of the train/test splits on the performance)}
\end{figure}

% \caption{\textbf{Partial correlation between imputation quality and performance - 20\% MCAR missingness.} A partial correlation close to 1 indicates that the quality of imputations is strongly associated to the quality of predictions, while a partial correlation close to zero means that the quality of predictions is not linked to the quality of imputations. Each partial correlation is computed using 40 different imputation/performance pairs, made of 4 imputation methods (mean, iterativeBR, missforest, condexp) repeated 10 times.}

\clearpage

\section{MNAR scenario.}
\label{sec:MNAR}

\subsection{Description of the self-censoring mechanism.}
\label{ss:probit_self_masking}

The MNAR mechanism implemented is a probit self-masking, defined for any feature $j$ as:
$$P(M_j=1|X_j) = \Phi(\lambda_j X_j - c_j)$$
where $\Phi$ denotes the probit function and $\lambda_j \in \mathbb R$, $c_j \in \mathbb R$ its slope and bias.

Denoting by $\sigma_j$ the standard deviation of feature $j$, we chose $\lambda_j = \frac{1}{2 \sigma_j}$ to have a missingness probability that smoothly increases over the support of the data. The bias is then fixed to impose a desired missing rate $r$ based on proposition~\ref{prop:MNAR_missing_rate}. 

\begin{proposition}[Achieving a targeted missing rate with probit self-censoring]
\label{prop:MNAR_missing_rate}
    Assume that the random variable $X \in \mathbb R$ follows a Gaussian distribution and is affected by a probit self-masking mechanism, i.e, $$X \sim \mathcal N(X| \mu, \sigma^2) \quad \text{and} \quad P(M=1|X) = \Phi(\lambda X - c),$$ where $\lambda \in \mathbb R$ and $c \in \mathbb R$ control the slope and shift of the self-masking function. Given a fixed slope $\lambda_0$, a missing rate $r$ is achieved by choosing:
    $$c_0 = \lambda_0 \br{\mu - \Phi^{-1}(r) \sqrt{\lambda_0^{-2} + \sigma^2}}$$
\end{proposition}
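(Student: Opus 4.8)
The plan is to compute the marginal missingness probability $r = P(M=1) = \esp[X]{\Phi(\lambda_0 X - c_0)}$ in closed form, set it equal to the target rate, and solve for $c_0$ by inverting $\Phi$. The only nontrivial ingredient is the classical Gaussian-smoothing identity for the probit link: for $Z \sim \mathcal N(\mu, \sigma^2)$ and constants $a, b \in \mathbb R$, one has $\esp{\Phi(aZ + b)} = \Phi\br{(a\mu + b)/\sqrt{1 + a^2 \sigma^2}}$. I would establish this first via the standard latent-variable trick: introduce $U \sim \mathcal N(0,1)$ independent of $Z$, so that $\Phi(aZ+b) = P(U \le aZ + b \mid Z)$ pointwise; taking expectations over $Z$ and using the tower property gives $\esp{\Phi(aZ+b)} = P(U - aZ \le b)$. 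Since $U - aZ$ is Gaussian with mean $-a\mu$ and variance $1 + a^2\sigma^2$, this probability equals $\Phi\br{(b + a\mu)/\sqrt{1 + a^2\sigma^2}}$, which proves the identity.

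Next I would apply this identity with $a = \lambda_0$ and $b = -c_0$, which yields directly $r = \Phi\br{(\lambda_0\mu - c_0)/\sqrt{1+\lambda_0^2\sigma^2}}$. Because $\Phi$ is strictly increasing and hence a bijection from $\mathbb R$ onto $(0,1)$, it can be inverted to give $\lambda_0 \mu - c_0 = \Phi^{-1}(r)\sqrt{1 + \lambda_0^2\sigma^2}$, that is, $c_0 = \lambda_0\mu - \Phi^{-1}(r)\sqrt{1 + \lambda_0^2\sigma^2}$. Finally, rewriting $\sqrt{1+\lambda_0^2\sigma^2} = \lambda_0\sqrt{\lambda_0^{-2}+\sigma^2}$ and factoring out $\lambda_0$ produces the stated form $c_0 = \lambda_0\br{\mu - \Phi^{-1}(r)\sqrt{\lambda_0^{-2}+\sigma^2}}$.

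There is no genuine obstacle in this argument; the two points requiring care are (i) recalling the Gaussian-smoothing identity for $\Phi$ rather than attempting the integral $\int \Phi(\lambda_0 x - c_0)\,\varphi\br{(x-\mu)/\sigma}\,dx/\sigma$ directly, and (ii) the sign in the last algebraic step, where $\sqrt{\lambda_0^2} = |\lambda_0|$ is replaced by $\lambda_0$; this is legitimate here since the mechanism is instantiated with $\lambda_0 = 1/(2\sigma_j) > 0$, but for a general negative slope the formula would carry a $|\lambda_0|$ instead.
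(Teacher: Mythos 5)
Your proposal is correct and follows essentially the same route as the paper: express $r = P(M=1)$ as the Gaussian expectation of the probit link, apply the identity $\int \Phi(\lambda x - c)\,\mathcal N(x|\mu,\sigma^2)\,\mathrm{d}x = \Phi\br{\frac{\mu - c/\lambda}{\sqrt{\lambda^{-2}+\sigma^2}}}$, and invert $\Phi$ to solve for $c_0$. The only differences are cosmetic: the paper cites Bishop's equation 4.152 for this identity while you derive it via the latent-variable trick, and your remark that the final factorization uses $\lambda_0 > 0$ (otherwise $\abs{\lambda_0}$ appears) is a valid caveat consistent with the paper's choice $\lambda_j = \frac{1}{2\sigma_j}$.
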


\begin{proof}
    \begin{align}
        r &= P(M = 1)\\
        \iff r &= \int P(M=1|X)P(X)\mathrm{d}X\\
        \iff r &= \int \Phi(\lambda X - c) \mathcal N(X|\mu, \sigma^2) \mathrm{d}X\\
        \iff r &= \Phi\br{\frac{\mu - \frac{c}{\lambda}}{\sqrt{\lambda^{-2} + \sigma^2}}} \label{eq:Bishop}\\
        \iff c &= \lambda \br{\mu - \Phi^{-1}(r) \sqrt{\lambda^{-2} + \sigma^2}}
    \end{align}
where \cref{eq:Bishop} is obtained according to equation 4.152 in Bishop.
\end{proof}

Experiments show that the target missingness probability is achieved even though the features are not Gaussian.

\newpage
\subsection{Results under MNAR missingness.}

\begin{figure}[h!]
    %\centering
    \includegraphics[width=\linewidth]{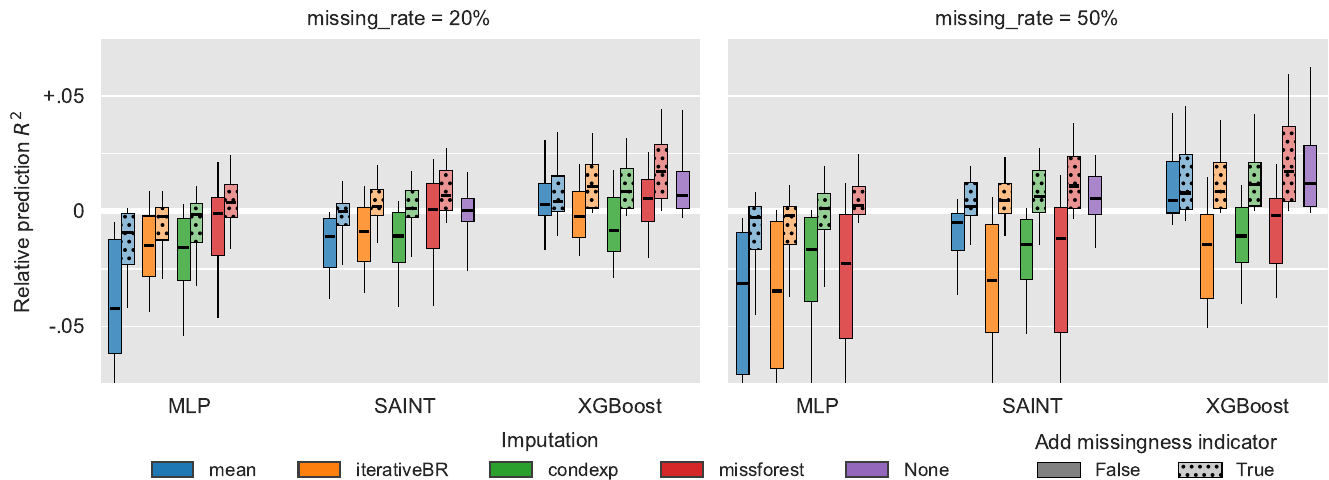}%
    \caption{\textbf{Relative prediction performances across datasets for different imputations, predictors, and use of the missingness indicator \textbf{under MNAR missingness}.} Each boxplot represents 200 points (20 datasets with 10 repetitions per dataset). The performances shown are \rsq scores on the test set relative to the mean performance across all models for a given dataset and repetition. A value of 0.01 indicates that a given method outperforms the average performance on a given dataset by 0.01 on the \rsq score.
    Corresponding critical difference plots in \cref{fig:cd_20_M_MNAR,fig:cd_50_M_MNAR}.
    \label{fig:perfs_MNAR}
    }
\end{figure}

\begin{figure}[h!]
    \centering
    \includegraphics[scale=0.5]{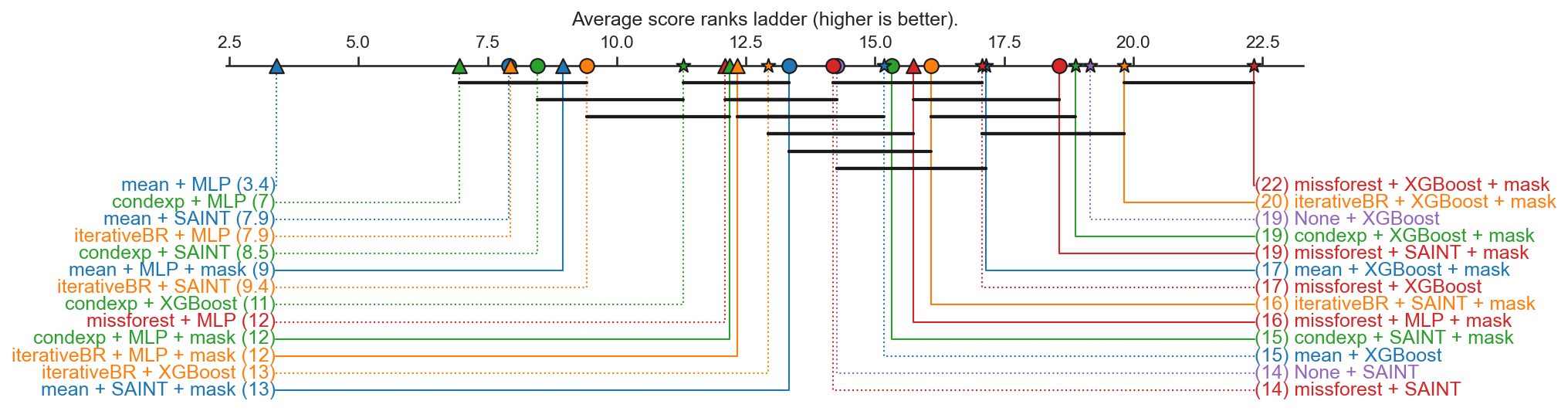}
    \caption{Critical Difference diagram - 20\% missingness rate \textbf{under MNAR missingness}.}
    \label{fig:cd_20_M_MNAR}
\end{figure}

\begin{figure}[h!]
    \centering
    \includegraphics[scale=0.5]{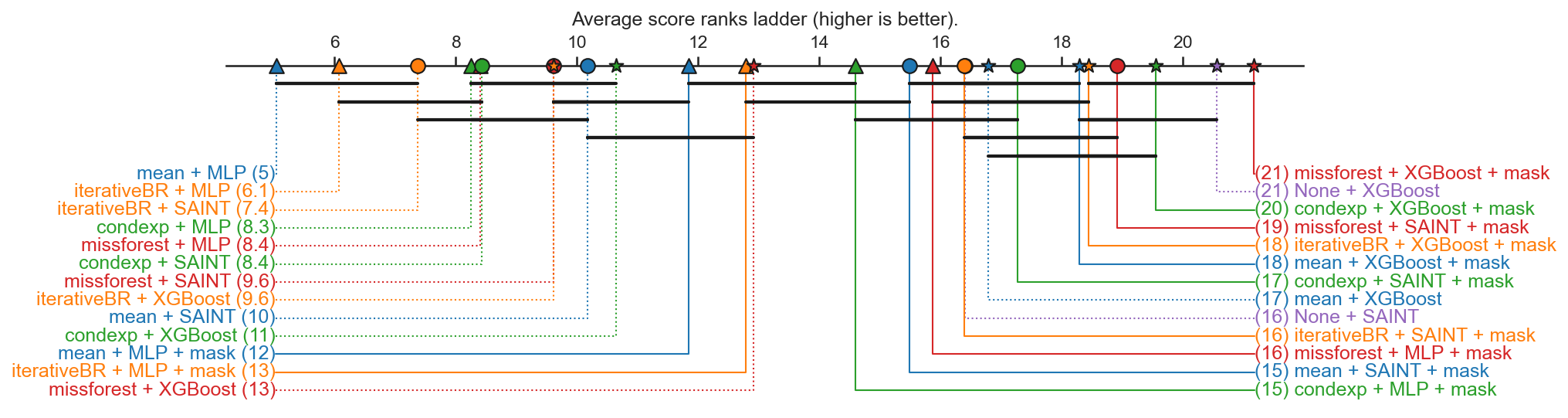}
    \caption{Critical Difference diagram - 50\% missingness rate \textbf{under MNAR missingness}.}
    \label{fig:cd_50_M_MNAR}
\end{figure}

\begin{figure}[h!]
    \includegraphics[width=\linewidth]{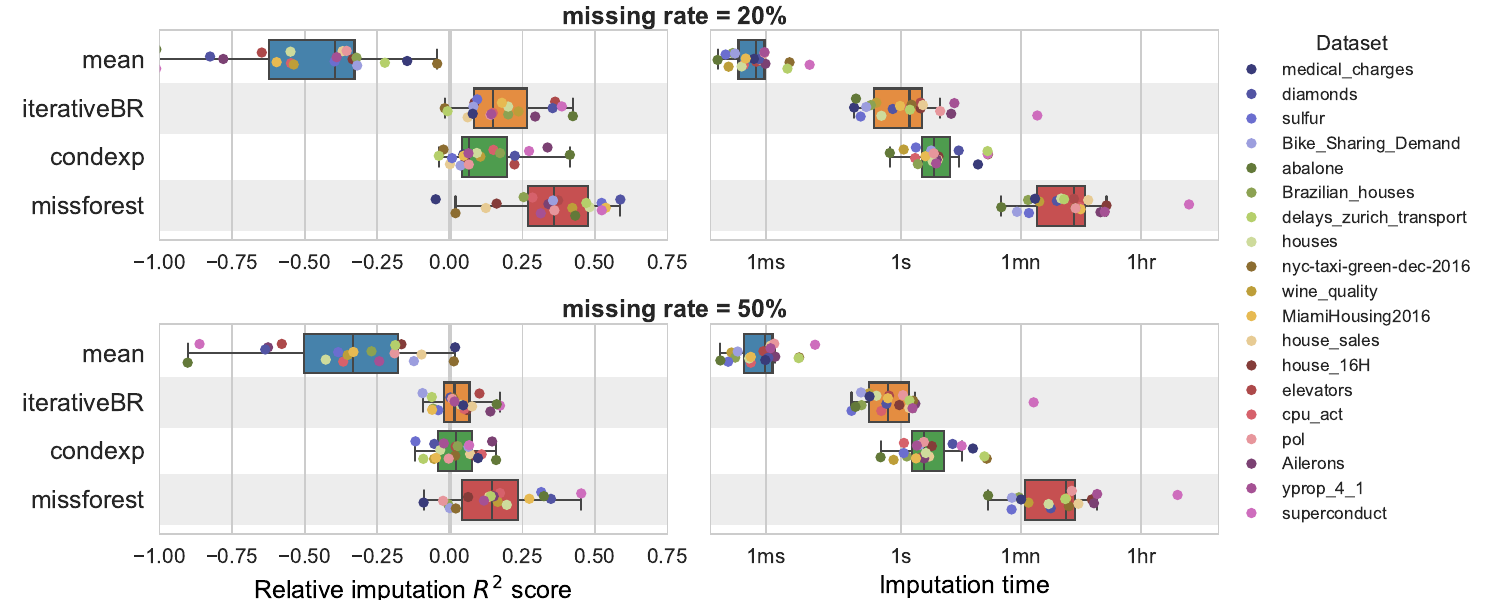}
    \caption{\textbf{Left: Imputer performance for recovery \textbf{under MNAR missingness}.} Performances are given as \rsq scores for each dataset relative to the mean performance across imputation techniques. A negative value indicates that a method perform worse than the average of other methods. \textbf{Right: Imputation time \textbf{under MNAR missingness}.}}
    \label{fig:imp_MNAR}
\end{figure}

\begin{figure}[h!]
    \centering
    \includegraphics[scale=0.5]{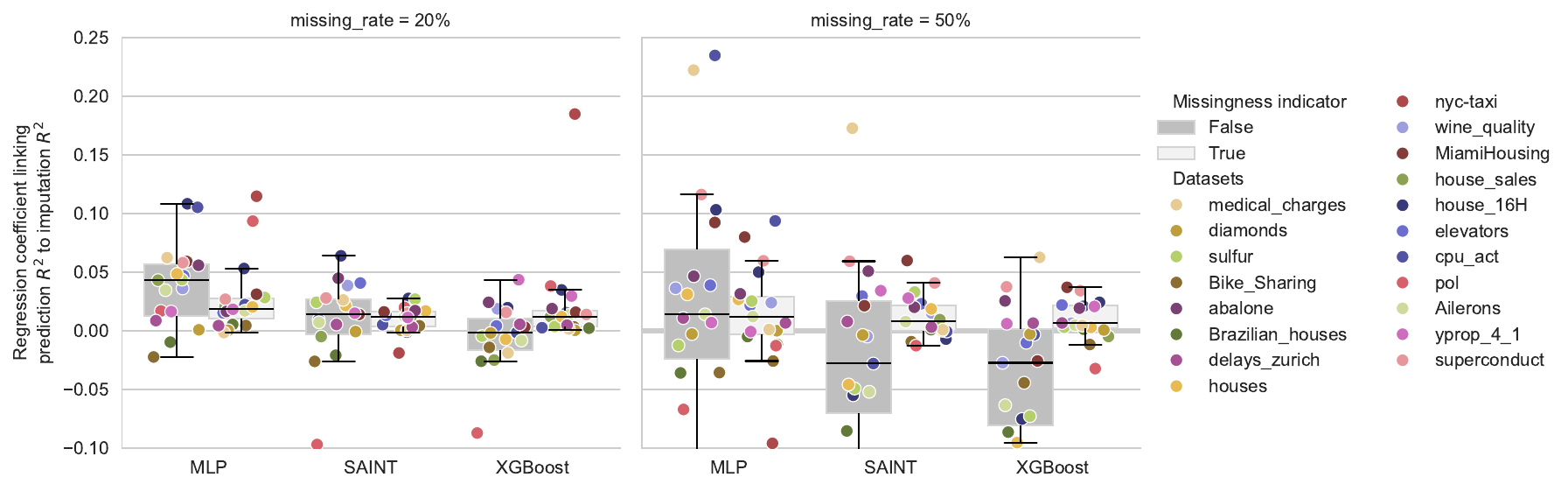}
    \caption{\textbf{Effect of the imputation recovery on the prediction performance \textbf{under MNAR missingness}.} We report the slope of the regression line where imputation quality is used to predict prediction performance.}
    \label{fig:effect_MNAR}
\end{figure}

\end{document}